%% file: main.tex
\documentclass{article}

\usepackage[preprint]{neurips_2026}

\usepackage[utf8]{inputenc}
\usepackage[T1]{fontenc}
\usepackage{hyperref}
\usepackage{url}
\usepackage{booktabs}
\usepackage{amsfonts}
\usepackage{amsmath,amssymb}
\usepackage{amsthm}
\usepackage{mathrsfs}
\usepackage{mathtools}
\usepackage{nicefrac}
\usepackage{microtype}
\usepackage{xcolor}
\usepackage{graphicx}
\usepackage{tikz}
\usepackage{multirow}
\usepackage{float}
\usepackage{wrapfig}
\usepackage{caption}
\usepackage{etoc}
\usepackage{algorithm}
\usepackage{algorithmicx}
\usepackage{algpseudocode}
\usepackage{listings}
\usepackage{colortbl}
\usepackage{xspace}
\usepackage{array}

\newtheorem{theorem}{Theorem}
\newtheorem{lemma}[theorem]{Lemma}
\newtheorem{corollary}[theorem]{Corollary}
\newtheorem{proposition}[theorem]{Proposition}
\newtheorem{assumption}[theorem]{Assumption}
\newtheorem{definition}{Definition}
\newtheorem{remark}{Remark}

\newcommand{\modelname}{\textsc{WMSS}\xspace}
\newif\ifnips\nipstrue

\title{Weak-Driven Learning: How Weak Agents Make Strong Agents Stronger}

\author{%
  Zehao Chen$^{1,2,*}$ \quad
  Gongxun Li$^{1,2,*}$ \quad
  Tianxiang Ai$^{2,*}$ \quad
  Yifei Li$^{1,2}$ \\
  Zixuan Huang$^{1}$ \quad
  Wang Zhou$^{2}$ \quad
  Tao Huang$^{2}$ \quad
  Fuzhen Zhuang$^{1}$ \\
  Xianglong Liu$^{1}$ \quad
  Jianxin Li$^{1}$ \quad
  Deqing Wang$^{1,\dagger}$ \quad
  Yikun Ban$^{1,\dagger}$ \\
  $^{1}$Beihang University \quad
  $^{2}$China Telecom eSurfing Cloud \\
  $^{*}$Equal contribution \quad
  $^{\dagger}$Corresponding authors \\
  \texttt{gyy\_chenzehao@chinatelecom.cn} \quad
  \texttt{zehaochenacid@buaa.edu.cn} \quad
  \texttt{yikunb@buaa.edu.cn}
}

\begin{document}

\maketitle

\input{content/0_abstract}

\input{content/1_introduction}

\input{content/2_related_work}

\input{content/3_method}

\input{content/4_experiments}

\input{content/5_conclusion}

\newpage
\bibliographystyle{plainnat}
\bibliography{references}

\newpage
\appendix
\section*{Appendix Contents}
\begin{small}
\begin{enumerate}
    \item[A.] \hyperref[app:limitations]{Limitations and Broader Implications}\hfill \pageref{app:limitations}
    \item[B.] \hyperref[appendix: additionalexperiments]{Additional Experiments}\hfill \pageref{appendix: additionalexperiments}
    \begin{enumerate}
        \item[B.1] \hyperref[app:frozen_weak_agent]{Freezing the Weak Agent during Weak-Driven Training}\hfill \pageref{app:frozen_weak_agent}
        \item[B.2] \hyperref[app:more_models]{Results on Gemma-3-4B-PT}\hfill \pageref{app:more_models}
        \item[B.3] \hyperref[app:logic_results]{Logic-Reasoning Results}\hfill \pageref{app:logic_results}
        \item[B.4] \hyperref[sec:logit_statistics]{Logit Statistics}\hfill \pageref{sec:logit_statistics}
        \item[B.5] \hyperref[app:checkpoint_pairing]{Weak--Strong Entropy Separation}\hfill \pageref{app:checkpoint_pairing}
        \item[B.6] \hyperref[app:hparam_protocol]{Hyperparameter Sensitivity}\hfill \pageref{app:hparam_protocol}
        \item[B.7] \hyperref[app:pass_at_k]{Pass@\textit{k} Comparison with DPO}\hfill \pageref{app:pass_at_k}
        \item[B.8] \hyperref[app:wd_ds_runtime]{Training Cost and WD-DS Runtime}\hfill \pageref{app:wd_ds_runtime}
    \end{enumerate}
    \item[C.] \hyperref[sec:appendix_grad_amp]{Supplementary Theory: Gradient Amplification}\hfill \pageref{sec:appendix_grad_amp}
    \item[D.] \hyperref[sec:appendix_joint_theory]{Additional Dynamics Diagnostics}\hfill \pageref{sec:appendix_joint_theory}
    \item[E.] \hyperref[app:dataconstruction]{Data Construction and Training Details}\hfill \pageref{app:dataconstruction}
\end{enumerate}
\end{small}
\clearpage
\input{content/6_appendex}

\end{document}

%% file: content/0_abstract.tex
\begin{abstract}
As supervised fine-tuning (SFT) becomes central to post-training large language models, target-only training can yield diminishing returns while hard samples remain under-learned. While existing methods continue to reinforce target predictions, informative corrective signal can remain latent in models' own historical weak states.
Motivated by this observation, we study \emph{Weak-Driven Learning}, a post-training direction that repurposes weak historical checkpoints as corrective signal for continued optimization. We instantiate this direction with \modelname{} (\textbf{W}eak Agents Can \textbf{M}ake \textbf{S}trong Agents \textbf{S}tronger), a training framework that combines \textbf{W}eak-\textbf{D}riven \textbf{D}iscrepancy \textbf{S}election (WD-DS) with \textbf{W}eak-\textbf{D}riven \textbf{J}oint \textbf{T}raining (WD-JT). By identifying recoverable learning gaps via entropy dynamics and reinforcing them through weak-driven logit mixing, \modelname{} provides one practical way to reuse historical weak states during training.
Experiments on mathematical reasoning, code generation, and logical reasoning across four 3B--8B base models (Qwen3-4B/8B, Qwen2.5-3B, Gemma-3-4B-PT) show gains over competitive post-training baselines, while incurring zero additional deployment-time inference cost.
\end{abstract}

%% file: content/1_introduction.tex
\section{Introduction}

Supervised Fine-Tuning (SFT)~\cite{ouyang2022training,touvron2023llama,zou2025transformer,chen2025llmboost} has become the standard post-training paradigm for improving large language models, translating pretrained capability into usable reasoning and instruction-following performance. Meanwhile, Knowledge Distillation (KD)~\cite{hinton2015distilling,Gou_2021,agarwal2023gkd} has emerged as a particularly effective approach for training small and medium-sized LLMs by matching the outputs or trajectories of stronger models. These imitation-based strategies are powerful when high-quality supervision is available, but such supervision is often tied to proprietary, expensive, or domain-specific teacher models that are not always accessible. This motivates post-training methods that can obtain useful learning signals without relying on an external stronger teacher.

A growing line of self-improvement methods follows this direction by constructing supervision from the model itself. However, in most such methods, the model-derived signal must first be converted into explicit supervision through additional computation, such as sampling self-generated responses for preference construction~\citep{chen2024self} or running auxiliary contexts to form teacher targets~\citep{mitra2025semanticsoft}. Rather than generating new supervision, we look back at a source already created by ordinary training: the weak agents produced along the training trajectory. The intuition comes from human collaboration: a strong problem solver can improve by observing and correcting the mistakes of a weaker collaborator, because those mistakes reveal concrete errors that must be ruled out. We therefore ask: \textbf{\textit{Can the weak agents already produced during standard training be reused directly as corrective signal?}}

We answer this question by exploring the reverse direction of knowledge distillation and challenging the assumption that weaker models are useful only as students. We study this idea as \emph{Weak-Driven Learning} and instantiate it as \modelname{} (\textbf{W}eak agents \textbf{M}ake \textbf{S}trong agents \textbf{S}tronger). As illustrated in Figure~\ref{fig:drivenlearning}, weak agents are not useful because their entire output distributions should be imitated. On routine tokens, weak and strong agents may behave similarly and provide little additional signal. Their value appears at key decision tokens, where the weak agent often remains uncertain or assigns mass to weak-revealed hard negatives: incorrect non-target tokens that mark unresolved boundaries. These token-level weak--strong discrepancies expose where the strong agent still needs to sharpen its decision boundary. \modelname{} therefore does not imitate the weak distribution; instead, it injects weak logits into a supervised mixed-logit objective where the ground-truth token remains the target, turning weak-agent mistakes into token-level corrective signal. Thus, otherwise discarded weak agents can become reusable corrective signal during training without deployment-time overhead.

\begin{figure}[t]
    \centering
    \includegraphics[width=0.94\linewidth]{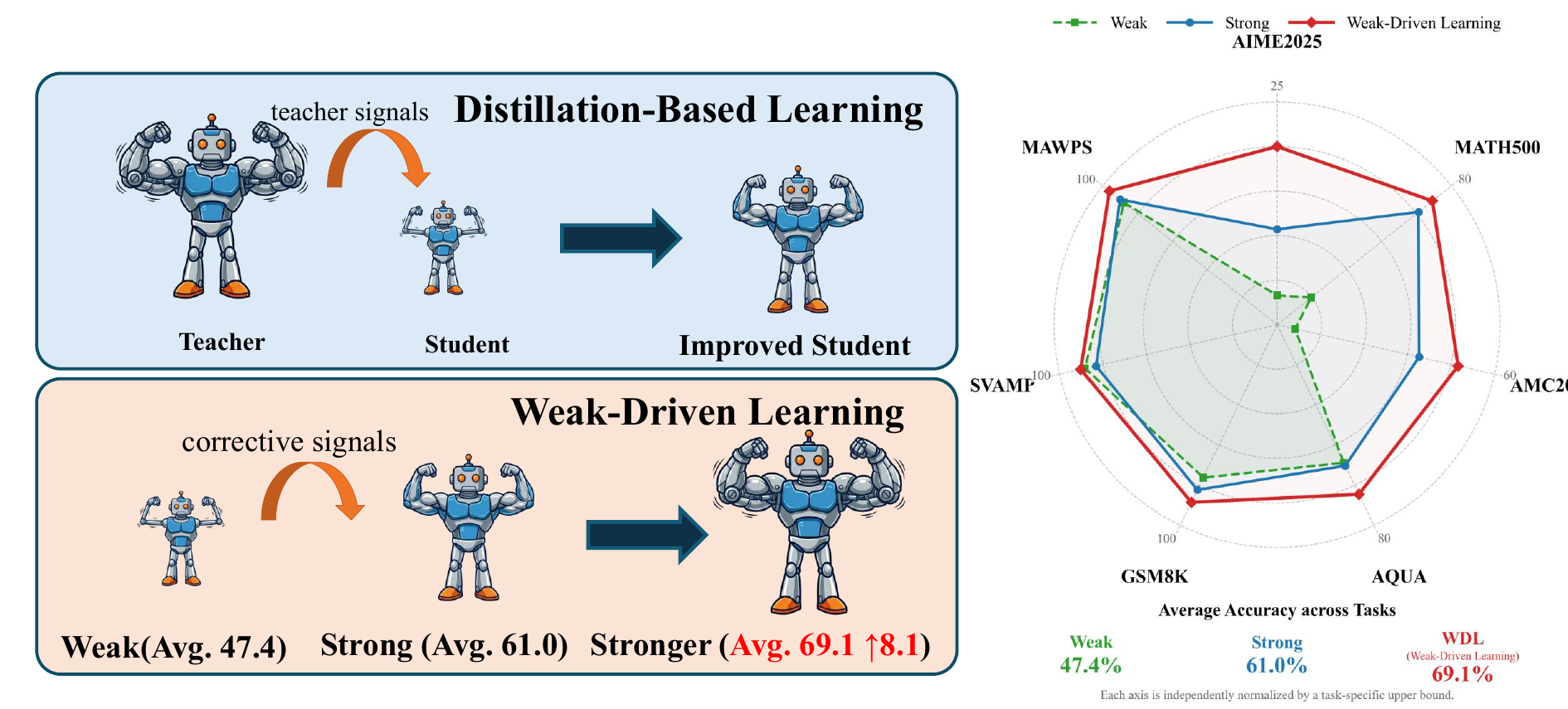}
    \caption{\textbf{Paradigm Comparison:} Distillation-Based Learning vs. Weak-Driven Learning.}
    \label{fig:drivenlearning}
\end{figure}

Our concrete implementation is \modelname{}, a practical post-training framework built around a core mixed-logit mechanism and an auxiliary discrepancy-selection module. \textbf{Weak-Driven Joint-Logit Training (WD-JT)} forwards the weak and strong agents on the same batch, fuses their logits, and computes cross-entropy against the ground-truth token to optimize the training pair while retaining the strong branch as the output model. Because weak logits assign probability mass to weak-revealed hard negatives, this objective turns weak-model errors into token-level corrective signal for the strong branch. \textbf{Weak-Driven Discrepancy Selection (WD-DS)} then improves the efficiency of WD-JT by selecting where this weak--strong computation should be spent: it forwards both checkpoints on the training set, computes token-averaged entropy over supervised completion tokens, and resamples an active distribution using weak-model difficulty and weak--strong entropy changes. In this way, WD-DS prioritizes samples that are historically hard, brittle, or forgotten by the current strong model, where weak-model confusion is more likely to provide useful corrective signal for WD-JT. We further formalize weak-driven learning on hard samples and provide a conditional gradient-level analysis of the mixed-logit mechanism in this regime.

More broadly, our findings suggest that continued post-training need not rely exclusively on stronger external teachers or newly curated annotations: historical training states can serve as reusable supervision. Our contributions are summarized as follows:
\begin{itemize}
    \setlength{\itemsep}{0.2em}
    \setlength{\topsep}{0em}
    \setlength{\partopsep}{0em}
    \setlength{\parsep}{0em}
    \item \textbf{Weak-driven post-training.} We introduce \emph{Weak-Driven Learning}, a new post-training direction in which weak historical checkpoints are used as corrective signal for strengthening a model on hard samples.
    \item \textbf{Training framework.} We instantiate this direction with \modelname{}, a practical post-training framework that combines \textbf{WD-DS} for entropy-discrepancy sample selection and \textbf{WD-JT} for weak-driven logit mixing. Together, these two modules operationalize weak-driven learning without additional inference overhead.
    \item \textbf{Theoretical analysis.} We provide a conditional gradient-level analysis of the mixed-logit mechanism, showing how incorporating weak-model logits can increase negative-token gradients under explicit margin assumptions.
    \item \textbf{Empirical performance.} Using SFT as the anchor, \modelname{} improves Math-Avg by up to $+5.9$ points and Code-Avg by up to $+4.4$ points across the reported backbones. Compared with the strongest baseline, \modelname{} nearly doubles the average gain over SFT on the main math/code settings, while incurring no additional deployment-time inference cost.
\end{itemize}

%% file: content/2_related_work.tex
\section{Related Work}
\label{sec:related_work}

\paragraph{Post-training: distillation and perturbation-based regularization.}
Post-training commonly starts from supervised fine-tuning (SFT)~\cite{ouyang2022training,touvron2023llama,yang2026grouprelativeadvantagebiased} and often uses knowledge distillation from a larger teacher~\cite{hinton2015distilling,agarwal2023gkd,gu2023minillm} or smoothed self-predictions~\cite{zhang2019teacherimproveperformanceconvolutional,xu2023wizardlm}. Such objectives transfer knowledge effectively, but can leave hard or high-discrepancy samples insufficiently corrected once aggregate SFT gains saturate. Perturbation regularizers such as \textbf{NEFTune}~\cite{jain2023neftunenoisyembeddingsimprove} instead inject noise for regularization, rather than using the model's historical error patterns as corrective signals.

\paragraph{Self-improvement and weak-to-strong supervision.}
Another line of work improves models using weaker or alternate-context variants, including weak-to-strong generalisation~\cite{burns2023weak,gulcehre2023reinforcedselftrainingrestlanguage,somerstep2025transfer}, weak-supervision reasoning~\cite{yang2024weaktostrongreasoning,yuan2026incentivizing}, preference optimisation~\cite{zhu2024weakstrongdpo}, and self-improvement methods such as \textbf{SPIN}~\cite{chen2024self}, \textbf{SSB}~\cite{mitra2025semanticsoft}, and \textbf{LightReasoner}~\cite{wang2025lightreasoner}. Nearby work also studies zero-to-strong elicitation~\cite{liu2025zerotostrong}, weak-to-strong test-time search~\cite{zhou2024weakstrongsearch}, tool generalisation~\cite{he2025gentool}, adaptive curricula~\cite{chen2025selfevolvingcurriculum}, differential-entropy data selection~\cite{su2026instructdiff}, and capability drift~\cite{thede2026captrack}.

\paragraph{Difference from prior post-training methods.}
The above methods often rely on inference-time steering, rollout curricula, or selection-only entropy scores; in contrast, \modelname{} reuses historical model states as corrective signals during post-training and avoids deployment-time search.

%% file: content/3_method.tex
\section{Weak-Driven Learning}
\label{sec:preliminaries}

\subsection{Preliminaries}
We consider an autoregressive language model $\mathcal{M}_{\theta}$ that maps a context sequence $x$ to logits $z_t \in \mathbb{R}^{|\mathcal{V}|}$ over a vocabulary $\mathcal{V}$, inducing $P_{\theta}(\cdot \mid x)=\mathrm{Softmax}(z_t)$.
\textbf{Supervised fine-tuning.}
In standard supervised fine-tuning (SFT), we minimize the negative log-likelihood over a dataset $\mathcal{D}$:
\begin{equation}
\label{eq:sft}
\mathcal{L}_{\text{SFT}}(\theta)=-\mathbb{E}_{(x,y)\sim \mathcal{D}}\!\left[\log P_{\theta}(y \mid x)\right],
\end{equation}
where $y$ denotes the ground-truth next token under context $x$ (we omit the time index when clear).\\
\textbf{Predictive entropy.}
To quantify uncertainty and weak--strong discrepancy, we monitor the predictive entropy $H(P_{\theta}(\cdot \mid x))=-\sum_{v\in\mathcal{V}}P_{\theta}(v \mid x)\log P_{\theta}(v \mid x)$.
Let $\ell(x,y;\theta)=-\log P_{\theta}(y\mid x)$ be the per-token cross-entropy loss. Its gradient w.r.t.\ the logit of any token $k\in\mathcal{V}$ is $\partial \ell/\partial z_t[k]=P_{\theta}(k \mid x)-\mathbb{I}[k=y]$. In particular, for any negative token $k\neq y$, $|\partial \ell/\partial z_t[k]|=P_{\theta}(k \mid x)$, i.e., the gradient magnitude on a negative class is proportional to its assigned probability.

\subsection{Problem: Weak-Driven Learning}
\label{subsec:weak_driven_learning}
\label{subsec:weak_driven_hard_samples}

We define \emph{Weak-Driven Learning} as a class of post-training paradigms in which the improvement of a strong agent is driven by its systematic discrepancies with a weaker agent, rather than by imitating a stronger teacher or directly imitating the weak agent. Formally, let $\mathcal{D}=\{(x_i,y_i)\}_{i=1}^N$ be a post-training dataset, let $\mathcal{M}_{\mathrm{strong}}$ denote the current strong agent, and let $\mathcal{M}_{\mathrm{weak}}$ denote a weaker agent, such as a historical checkpoint from the same training trajectory or a related lower-capability model. For an input $x$, let
\[
    o_{\mathrm{strong}}(x)
    =
    \mathcal{M}_{\mathrm{strong}}(x),
    \qquad
    o_{\mathrm{weak}}(x)
    =
    \mathcal{M}_{\mathrm{weak}}(x)
\]
denote their outputs, which may take the form of logits, probabilities, hidden representations, generated responses, rewards, uncertainty estimates, or other model-derived signals.

The goal of weak-driven learning is to construct a discrepancy-aware training signal
\begin{equation}
    \mathcal{S}_{\mathrm{WD}}
    =
    \Phi\!\left(
        o_{\mathrm{strong}}(x),
        o_{\mathrm{weak}}(x),
        y
    \right),
    \qquad (x,y)\in\mathcal{D},
    \label{eq:wd_signal_general}
\end{equation}
where $\Phi$ transforms weak--strong discrepancies into corrective supervision for the strong agent while preserving the task target $y$ as the optimization anchor. 
The corresponding post-training objective can be written abstractly as
\begin{equation}
    \left(
    \mathcal{M}_{\mathrm{strong}}^{+},
    \mathcal{M}_{\mathrm{weak}}
    \right)
    =
    \arg\min_{\mathcal{M}_{\mathrm{strong}},\,\mathcal{M}_{\mathrm{weak}}}
    \sum_{(x,y)\in\mathcal{D}}
    \mathcal{L}_{\mathrm{WD}}
    \left(
        \mathcal{M}_{\mathrm{strong}},
        \mathcal{M}_{\mathrm{weak}};
        \mathcal{S}_{\mathrm{WD}}(x,y)
    \right),
    \label{eq:wd_objective_general}
\end{equation}
where both the strong and weak agents are optimized during post-training, but \textbf{only the updated strong agent $\mathcal{M}_{\mathrm{strong}}^{+}$ is retained for downstream use}.

In this problem, the weak agent is not treated as a teacher to imitate. Instead, its errors, uncertainty, and disagreement with the strong agent reveal residual weaknesses, unstable predictions, or under-optimized regions of the strong agent. Thus, weak-driven learning converts weaker agents from discarded training artifacts into useful sources of corrective post-training signal.

\section{Method: WMSS}
\label{sec:methods}
\label{sec:methodology}

\modelname{} is our practical instantiation of weak-driven learning. It consists of two modules: \textbf{Weak-Driven Joint-Logit Training (WD-JT)}, the core mechanism that converts weak-model confusion into gradient signal through logit mixing, and \textbf{Weak-Driven Discrepancy Selection (WD-DS)}, an auxiliary module that uses weak--strong entropy discrepancies to identify hard or high-discrepancy samples for continued learning.

In this section, we describe how \modelname{} operationalizes weak-driven learning to further strengthen a strong agent.
\paragraph{Weak and strong agent initialization.}
In \modelname, the weak agent $\mathcal{M}_{\mathrm{weak}}$ is initialized from a historical checkpoint and provides weak logits during WD-JT. We begin with \textbf{Phase~1 (Initialization)}: starting from a base model $\mathcal{M}_0$, we perform standard SFT to obtain $\mathcal{M}_1$, and set
\begin{equation}
\label{eq:wmss_init}
\mathcal{M}_{\mathrm{weak}} \leftarrow \mathcal{M}_0,\qquad
\mathcal{M}_{\mathrm{strong}} \leftarrow \mathcal{M}_1.
\end{equation}
The weak agent provides a corrective signal via its logits $z_{\mathrm{weak}}(x) \in \mathbb{R}^{|\mathcal V|}$, which preserve a softer decision boundary and highlight weak-revealed hard negatives, stabilising continued optimisation. For the iterative variant ($K{>}1$ in Algorithm~\ref{alg:training_pipeline}), the weak agent for each outer iteration is initialized from the previous iteration's output checkpoint.

\begin{figure}[t]
    \centering
    \includegraphics[width=\textwidth]{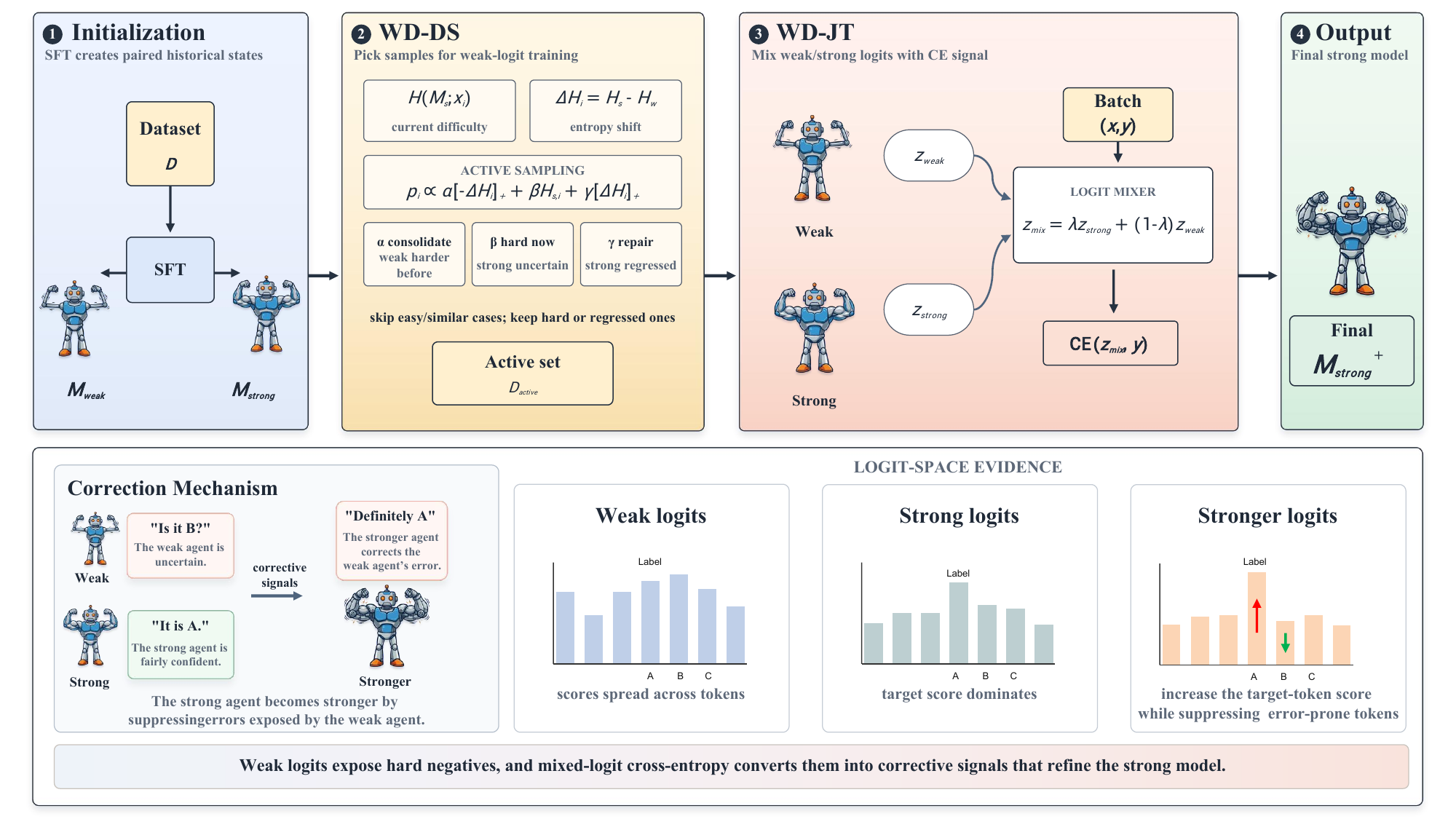}
    \caption{Weak-Driven Learning instantiated by \modelname. Overview of the \modelname{} framework. It has three phases: (1) initialization, (2) Weak-Driven Discrepancy Selection (WD-DS), and (3) Weak-Driven Joint-Logit Training (WD-JT); the right panel visualizes the weak-driven mixed-logit principle through logit mixing and gradient amplification.}
    \label{fig:pivot}
\end{figure}

\subsection{WD-JT: Weak-Driven Joint-Logit Training}
\label{subsec:joint_training}

The core weak-driven operation in \modelname{} is WD-JT: given a weak agent $\mathcal{M}_{\mathrm{weak}}$ and a strong agent $\mathcal{M}_{\mathrm{strong}}$, we optimize the paired agents through a weak-conditioned mixed-logit objective during training, converting weak-model confusion into additional optimization signal. The strong branch is retained as the final improved model. This module is the primary mechanism by which \modelname{} instantiates weak-driven learning.

\paragraph{Weak-conditioned logit mixing.}
For a training pair $(x,y)$, let $z_{\mathrm{strong}}(x), z_{\mathrm{weak}}(x)\in\mathbb{R}^{|\mathcal V|}$ denote the next-token logits produced by $\mathcal{M}_{\mathrm{strong}}$ and $\mathcal{M}_{\mathrm{weak}}$, respectively. We construct a mixed probability map by linearly mixing the logits:
\begin{equation}
\label{eq:logit_mix}
z_{\mathrm{mix}}(x)
=
\lambda\, z_{\mathrm{strong}}(x)
+
(1-\lambda)\, z_{\mathrm{weak}}(x),
\end{equation}
where $ \lambda\in[0,1]$.
We then optimize the training pair using the mixed-logit distribution
$P_{\mathrm{mix}}(\cdot\mid x) := \mathrm{Softmax}(z_{\mathrm{mix}}(x))$:
\begin{equation}
\label{eq:joint_sft_loss}
\mathcal{L}_{\mathrm{mix}}
=
- \mathbb{E}_{(x,y)\sim\mathcal D}
\left[
\log P_{\mathrm{mix}}(y\mid x)
\right]
\end{equation}

This mixed-logit objective couples the weak agent's predictive distribution with the paired training update.
Rather than treating the weak agent as a teacher, WD-JT uses its output distribution as token-level corrective signal.
Because the weak agent retains relatively higher probability on weak-revealed hard negatives than the strong agent, the fused distribution exposes non-target tokens that standard SFT may have driven toward negligible probability.
Computing ground-truth cross-entropy on this fused distribution converts these hard negatives into corrective signal for hard samples.

\subsection{WD-DS: Weak-Driven Discrepancy Selection}
\label{subsec:discrepancy_selection}

WD-DS is a lightweight selector for WD-JT. While WD-JT specifies \emph{how} weak agents provide corrective signal through logit mixing, WD-DS decides \emph{where} this weak--strong computation is most useful. It prioritizes samples that remain learnable and expose weak--strong discrepancies, and downplays samples already confidently handled by both agents or samples where the two models behave too similarly to reveal weak-model mistakes.

\paragraph{Entropy-based selection.}
For each supervised sequence example $x_i=(u_i,y_i)$, let $\mathcal{T}_i$ denote the target-token positions included in the training loss and let $c_{i,t}=(u_i,y_{i,<t})$ be the prompt concatenated with the ground-truth target prefix before position $t$. We use the length-normalized predictive entropy
\begin{equation}
H(\mathcal{M};x_i)
=
-\frac{1}{|\mathcal{T}_i|}
\sum_{t\in\mathcal{T}_i}
\sum_{v\in\mathcal{V}}
P_{\mathcal{M}}(v\mid c_{i,t})
\log P_{\mathcal{M}}(v\mid c_{i,t}).
\end{equation}
The weak--strong discrepancy is
\begin{equation}
\Delta H_i
=
H(\mathcal{M}_{\mathrm{strong}};x_i)
-
H(\mathcal{M}_{\mathrm{weak}};x_i).
\end{equation}
WD-DS samples examples according to
\begin{equation}
\label{eq:sampling_prob}
p_i \propto
\alpha[-\Delta H_i]_+
+
\beta H(\mathcal{M}_{\mathrm{strong}};x_i)
+
\gamma[\Delta H_i]_+,
\end{equation}
where $[u]_+ := \max(u,0)$ and $p_i$ is normalized over $\mathcal{D}$. The three terms select consolidation samples, currently difficult samples, and recoverable regressions. Given $p_i$, WD-DS draws $|\mathcal{D}|$ examples with replacement and removes duplicate sampled indices to form $\mathcal{D}_{\mathrm{active}}$; WD-JT then trains on $\mathcal{D}_{\mathrm{active}}$ with the same mixed-logit objective in Eq.~\eqref{eq:joint_sft_loss}.

\subsection{Training Pipeline}
\label{subsec:pipeline}

Algorithm~\ref{alg:training_pipeline} summarizes \modelname{} and the roles of its two modules.\\
\textbf{Phase 1 (Initialization)} trains $\mathcal{M}_0$ with SFT to obtain $\mathcal{M}_{\mathrm{strong}}$ and initializes $\mathcal{M}_{\mathrm{weak}} \leftarrow \mathcal{M}_0, \mathcal{M}_{\mathrm{strong}} \leftarrow \mathcal{M}_1$.\\
\textbf{Phase 2 (WD-DS)} constructs an active training distribution by computing token-averaged sequence entropy between $\mathcal{M}_{\mathrm{weak}}$ and the current $\mathcal{M}_{\mathrm{strong}}$, yielding a selected active dataset $\mathcal{D}_{\mathrm{active}}$ via Eq.~\eqref{eq:sampling_prob}. \\
\textbf{Phase 3 (WD-JT)} then performs weak-driven mixed-logit training on $\mathcal{D}_{\mathrm{active}}$ by mixing logits as in Eq.~\eqref{eq:logit_mix} and optimizing the mixed-logit cross-entropy loss in Eq.~\eqref{eq:joint_sft_loss}. This allows the training pair to learn from weak-revealed hard negatives on hard samples, producing $\mathcal{M}_{\mathrm{strong}}^{+}$ from the retained strong branch.

\begin{algorithm}[t]
\begin{minipage}{0.7\textwidth}
  \caption{Weak Agents make Strong Agents Stronger}
  \label{alg:training_pipeline}
  \begin{algorithmic}[1]
    \Require Dataset $\mathcal{D}$, Base Model $\mathcal{M}_0$
    \Require Iterations $K$, Hyperparams $\alpha,\beta,\gamma,\lambda$
    \State \textbf{Phase 1: Initialization}
    \State $\mathcal{M}_1 \leftarrow \text{Train}(\mathcal{M}_0, \mathcal{D}, \mathcal{L}_{\text{SFT}})$
    \State \textbf{Phase 2: Iterative Training Loop}
    \For{$t = 1$ to $K$}
        \State \textit{// Step A: Weak-Driven Discrepancy Selection}
        \State Compute token-averaged sequence entropies as defined above
        \State $\Delta \mathbf{H} \leftarrow \mathbf{H}(\mathcal{M}_t) - \mathbf{H}(\mathcal{M}_{t-1})$
        \State Calculate $\mathbf{p}$ via Eq.~\ref{eq:sampling_prob}
        \State Sample and deduplicate to form $\mathcal{D}_{\text{active}}$

        \State \textit{// Step B: Weak-Driven Learning}
        \State Initialize $(\mathcal{M}_{\mathrm{weak}}^{(t)}, \mathcal{M}_{\mathrm{strong}}^{(t)}) \leftarrow (\mathcal{M}_{t-1}, \mathcal{M}_{t})$
        \For{batch $(x, y) \in \mathcal{D}_{\text{active}}$}
            \State $z_{\text{weak}} \leftarrow \text{Forward}(\mathcal{M}_{\mathrm{weak}}^{(t)}, x)$
            \State $z_{\text{strong}} \leftarrow \text{Forward}(\mathcal{M}_{\mathrm{strong}}^{(t)}, x)$
            \State $z_{\text{mix}} \leftarrow \lambda z_{\text{strong}} + (1-\lambda)z_{\text{weak}}$
            \State Update $(\mathcal{M}_{\mathrm{weak}}^{(t)}, \mathcal{M}_{\mathrm{strong}}^{(t)})$ on $\mathrm{CE}(z_{\text{mix}}, y)$
        \EndFor
        \State $\mathcal{M}_{t+1} \leftarrow \mathcal{M}_{\mathrm{strong}}^{(t)}$
    \EndFor
    \State \textbf{return} $\mathcal{M}_{K+1}$
  \end{algorithmic}
\end{minipage}
\end{algorithm}

\section{Mechanistic Analysis of Weak-Driven Learning}
\label{sec:theoretical_analysis}

We next provide a conditional gradient-level explanation for why weak-driven logit mixing can help: \textbf{mixing strong logits with a weak agent can amplify gradient signal on weak-revealed hard negatives} under the margin conditions stated below. The analysis is local and mechanism-oriented: it explains one sufficient regime in which WD-JT increases probability mass on non-target tokens in $\mathcal{H}$, thereby converting weak-agent confusion into corrective signal for the hard-sample regime targeted by WD-DS.

\paragraph{Negative-token gradient setup.}
Let $\ell(x,y;\theta) = -\log P_{\theta}(y\mid x)$. Its gradient with respect to any logit is $\partial \ell / \partial z_t[k] = P_{\theta}(k\mid x) - \mathbb{I}[k=y]$, so for every negative token $k\neq y$ the gradient magnitude equals $P_{\theta}(k\mid x)$. On hard samples, target-only SFT can assign very small probability to weak-revealed hard negatives before they are fully resolved, leaving limited gradient signal on the corresponding negative tokens. \modelname{} exploits this by increasing probability mass on these hard negatives through logit mixing.

Consider a weak agent $\mathcal{M}_{\mathrm{weak}}$ initialized from a historical checkpoint and a strong agent $\mathcal{M}_{\mathrm{strong}}$ that, for a context $x$, produce logits $z_{\mathrm{weak}}(x), z_{\mathrm{strong}}(x) \in \mathbb{R}^{|\mathcal{V}|}$, and the mixed logits
\begin{equation}
    z_{\text{mix}}(x) = (1-\lambda) z_{\mathrm{weak}}(x) + \lambda z_{\mathrm{strong}}(x), \quad \lambda \in [0,1].
    \label{eq:main_mix}
\end{equation}
Operationally, $z_{\text{mix}}(x)$ is the fused logit map used for the paired training update, injecting the weak agent's uncertainty while preserving the strong agent's target direction.
Let $y$ denote the target index, $e_y$ the one-hot vector, and define $P_{\mathrm{mix}}(\cdot\mid x)=\mathrm{Softmax}(z_{\text{mix}}(x))$ and $P_{\mathrm{strong}}(\cdot\mid x)=\mathrm{Softmax}(z_{\mathrm{strong}}(x))$. More generally, for any logit map $z(x)$, let $P_z(\cdot\mid x)=\mathrm{Softmax}(z(x))$. The cross-entropy gradient on fused logits is
\begin{equation}
    g_{\text{mix}} = \nabla_{z_{\text{mix}}(x)}\mathcal{L} = P_{\mathrm{mix}}(\cdot\mid x) - e_y,
    \label{eq:main_grad_mix}
\end{equation}
For any negative token $k \neq y$, $g_{\text{mix}}[k] = P_{\mathrm{mix}}(k\mid x)$, so increasing negative probability mass directly increases the gradient magnitude on that token. Standard SFT on the strong agent gives $g_{\text{sft}} = P_{\mathrm{strong}}(\cdot\mid x) - e_y$ and provides little signal on a hard negative once $P_{\mathrm{strong}}(k\mid x)$ is already small.

\textbf{Margins and hard negatives.} Define the target margin for any negative token $k \neq y$ as $m_k(z(x)) = z(x)[y] - z(x)[k]$. Smaller margins mean higher confusion. Define the hard-negative set
\begin{equation}
    \mathcal{H} = \{ k \neq y : m_k(z_{\mathrm{weak}}(x)) < m_k(z_{\mathrm{strong}}(x)) \},
    \label{eq:main_hard_set}
\end{equation}
i.e., tokens where the weak agent is less separated than the strong agent, highlighting unresolved boundaries. The mixed margin is a convex combination $m_k(z_{\text{mix}}(x)) = (1-\lambda)m_k(z_{\mathrm{weak}}(x)) + \lambda m_k(z_{\mathrm{strong}}(x))$, so mixing shrinks margins toward the weak agent on $\mathcal{H}$ and raises the relative probability of hard negatives.

\begin{theorem}[Total negative-mass increase under uniform margin shrinkage]
If $m_k(z_{\mathrm{weak}}(x)) \le m_k(z_{\mathrm{strong}}(x))$ for all $k \neq y$, then
\begin{align}
    P_{\mathrm{mix}}(y\mid x) &\le P_{\mathrm{strong}}(y\mid x), \nonumber\\
    \sum_{k \neq y} P_{\mathrm{mix}}(k\mid x) &\ge \sum_{k \neq y} P_{\mathrm{strong}}(k\mid x).
    \label{eq:main_total_neg_mass}
\end{align}
\end{theorem}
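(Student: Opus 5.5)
The plan is to rewrite the target probability purely in terms of margins. For any logit vector $z$,
\[
P_z(y\mid x)=\frac{e^{z[y]}}{\sum_{v}e^{z[v]}}=\frac{1}{1+\sum_{k\neq y}e^{-m_k(z)}}.
\]
This quantity depends on $z$ only through the margins $(m_k(z))_{k\neq y}$. It is also coordinatewise nondecreasing in each margin, because each $e^{-m_k}$ decreases as $m_k$ grows. Since $\sum_{k\neq y}P_z(k\mid x)=1-P_z(y\mid x)$, the second inequality will follow immediately from the first. So the whole task reduces to showing $m_k(z_{\mathrm{mix}})\le m_k(z_{\mathrm{strong}})$ for every $k\neq y$.

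For that step I will use linearity of the margin in the logits. From \eqref{eq:main_mix},
\[
m_k(z_{\mathrm{mix}}(x))=(1-\lambda)m_k(z_{\mathrm{weak}}(x))+\lambda\, m_k(z_{\mathrm{strong}}(x)).
\]
The hypothesis gives $m_k(z_{\mathrm{weak}})\le m_k(z_{\mathrm{strong}})$ for all $k\neq y$. Since $\lambda\in[0,1]$, the convex combination is therefore at most $m_k(z_{\mathrm{strong}})$. Each term then satisfies $e^{-m_k(z_{\mathrm{mix}})}\ge e^{-m_k(z_{\mathrm{strong}})}$. Summing over $k\neq y$ makes the denominator for the mixed distribution at least as large as the one for the strong distribution, which gives $P_{\mathrm{mix}}(y\mid x)\le P_{\mathrm{strong}}(y\mid x)$. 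Taking complements yields \eqref{eq:main_total_neg_mass}.

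No step is a genuine obstacle. The only point needing care is the first identity: dividing numerator and denominator by $e^{z[y]}$ shows the softmax is invariant to shifts in the logits, so only the margins matter. I will also note the endpoint $\lambda=1$, where equality holds trivially.

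It is worth remarking, though not needed for the proof, that the conclusion concerns only the \emph{total} negative mass. An individual negative $k$ can still lose probability if other margins shrink more. That is why the statement is phrased in aggregate rather than token by token.
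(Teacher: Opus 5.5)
Your proof is correct and follows the paper's argument essentially step for step. You use the margin-mixing identity to get $m_k(z_{\mathrm{mix}})\le m_k(z_{\mathrm{strong}})$, the representation $P_z(y)=\bigl(1+\sum_{k\neq y}e^{-m_k(z)}\bigr)^{-1}$ to turn this into $P_{\mathrm{mix}}(y)\le P_{\mathrm{strong}}(y)$, and complementation for the negative-mass inequality; your closing remark about individual negatives also matches the paper, which handles per-token amplification separately through an extra condition on the ratio of target probabilities.
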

This sufficient condition shows that, when the weak agent is uniformly more uncertain than the strong agent, mixed-logit training shifts probability mass from the target to negative tokens. In that regime, gradient magnitude on negative classes can be amplified on hard samples.

\begin{corollary}[Per-token amplification on hard negatives]
For any $k \in \mathcal{H}$, the mixed negative gradient satisfies
\begin{align}
    P_{\mathrm{mix}}(k\mid x) &\ge P_{\mathrm{strong}}(k\mid x) \quad \text{whenever} \nonumber\\
    \frac{P_{\mathrm{mix}}(y\mid x)}{P_{\mathrm{strong}}(y\mid x)}
    &\ge \exp\!\left(-(1-\lambda)\Delta m_k\right),
    \label{eq:main_amp_condition}
\end{align}
where $\Delta m_k = m_k(z_{\mathrm{strong}}(x))-m_k(z_{\mathrm{weak}}(x))>0$.
\end{corollary}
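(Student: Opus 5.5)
The plan is to express each softmax probability through the ratio to the target probability, which depends only on margins. For any logit map $z$ and any $k\neq y$,
\[
\frac{P_z(k\mid x)}{P_z(y\mid x)} = \exp\!\bigl(-m_k(z(x))\bigr).
\]
The normalizing constant therefore cancels, and the whole comparison reduces to a comparison of margins together with the ratio of target probabilities.

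Applying this identity to $z_{\text{mix}}$ and to $z_{\mathrm{strong}}$ and dividing gives
\[
\frac{P_{\mathrm{mix}}(k\mid x)}{P_{\mathrm{strong}}(k\mid x)}
= \frac{P_{\mathrm{mix}}(y\mid x)}{P_{\mathrm{strong}}(y\mid x)}\,
\exp\!\bigl(m_k(z_{\mathrm{strong}}(x)) - m_k(z_{\text{mix}}(x))\bigr).
\]
Next I substitute the convex-combination identity already noted in the text, namely
\[
m_k(z_{\text{mix}}(x)) = (1-\lambda)m_k(z_{\mathrm{weak}}(x)) + \lambda m_k(z_{\mathrm{strong}}(x)).
\]
It gives $m_k(z_{\mathrm{strong}}) - m_k(z_{\text{mix}}) = (1-\lambda)\bigl(m_k(z_{\mathrm{strong}}) - m_k(z_{\mathrm{weak}})\bigr) = (1-\lambda)\Delta m_k$. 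This identity holds because the margin is linear in $z$. The ratio is then
\[
\frac{P_{\mathrm{mix}}(k\mid x)}{P_{\mathrm{strong}}(k\mid x)}
= \frac{P_{\mathrm{mix}}(y\mid x)}{P_{\mathrm{strong}}(y\mid x)}\, e^{(1-\lambda)\Delta m_k}.
\]

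The right-hand side is at least $1$ exactly when the hypothesis \eqref{eq:main_amp_condition} holds. This gives $P_{\mathrm{mix}}(k\mid x)\ge P_{\mathrm{strong}}(k\mid x)$ for that $k$. Membership $k\in\mathcal H$ is used only to make $\Delta m_k>0$, so that the threshold $\exp(-(1-\lambda)\Delta m_k)$ is strictly below $1$ and the condition is nontrivial. The algebra is valid for any sign of $\Delta m_k$.

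No step is a genuine obstacle. The only points needing care are the signs in the exponent and confirming that the normalizing constant cancels. As a remark, I would add that in the uniform-shrinkage regime of the preceding theorem the target ratio is at most $1$. The corollary's condition therefore asks that the target probability not fall faster than the per-token margin gain compensates.
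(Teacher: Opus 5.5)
Your proposal is correct and follows the paper's own proof essentially step for step: the log-odds identity $P_z(k)/P_z(y)=\exp(-m_k(z))$, then the convex-combination margin identity giving $m_k(z_{\mathrm{strong}})-m_k(z_{\text{mix}})=(1-\lambda)\Delta m_k$, then reading off when the resulting probability ratio is at least $1$. One small correction to your side remark: the threshold $\exp(-(1-\lambda)\Delta m_k)$ is strictly below $1$ only when $\lambda<1$; at $\lambda=1$ the mixed and strong logits coincide and the statement holds trivially.
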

For hard negatives, mixed-logit training increases their assigned probability---and thus their gradient magnitude---under a mild, explicit condition on the relative target probabilities.

\begin{proposition}[Branch-level logit updates on negative and target tokens]
Under a local diagonal-kernel approximation to the mixed-logit paired update in Algorithm~\ref{alg:training_pipeline}, for branch $i\in\{\mathrm{weak},\mathrm{strong}\}$ with $s_{\mathrm{weak}}=1-\lambda$ and $s_{\mathrm{strong}}=\lambda$,
\begin{align}
    \Delta z_{i,k} &\approx -\eta s_i\, P_{\mathrm{mix}}(k\mid x) \quad (k \neq y), \nonumber\\
    \Delta z_{i,y} &\approx \eta s_i\, (1-P_{\mathrm{mix}}(y\mid x)).
    \label{eq:main_update}
\end{align}
\end{proposition}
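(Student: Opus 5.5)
We would prove the proposition by a first-order (single gradient step) expansion of each branch's logits, using the chain rule through the mixing map in Eq.~\eqref{eq:main_mix} together with the fused-logit gradient in Eq.~\eqref{eq:main_grad_mix}. Let $\theta_i$ denote the parameters of branch $i$ and $z_i = z_i(x;\theta_i)$ its next-token logits. Because $z_{\text{mix}} = s_{\mathrm{weak}} z_{\mathrm{weak}} + s_{\mathrm{strong}} z_{\mathrm{strong}}$ is linear, the chain rule gives $\nabla_{z_i}\mathcal{L} = s_i\, g_{\text{mix}} = s_i\,(P_{\mathrm{mix}}(\cdot\mid x) - e_y)$. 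This is the only place the mixing enters, and it is exact.

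Next we would take one gradient-descent step on $\mathrm{CE}(z_{\text{mix}},y)$ with learning rate $\eta$, as in the paired update of Algorithm~\ref{alg:training_pipeline}. The step is $\Delta\theta_i = -\eta\, J_i^\top \nabla_{z_i}\mathcal{L}$, where $J_i = \partial z_i/\partial\theta_i$. Linearizing the logits to first order in $\eta$ gives
\begin{equation*}
\Delta z_i \approx J_i\,\Delta\theta_i = -\eta\, s_i\, K_i\, (P_{\mathrm{mix}}(\cdot\mid x)-e_y),
\end{equation*}
where $K_i = J_i J_i^\top$ is the empirical logit kernel of branch $i$ at $x$.

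The \emph{local diagonal-kernel approximation} is the modelling assumption $K_i \approx I$. More generally one can take $K_i \approx c_i I$ and absorb $c_i$ into $\eta$. Substituting $K_i \approx I$ and reading off coordinates, we use $g_{\text{mix}}[k] = P_{\mathrm{mix}}(k\mid x)$ for $k\neq y$ and $g_{\text{mix}}[y] = P_{\mathrm{mix}}(y\mid x) - 1$. This yields $\Delta z_{i,k}\approx -\eta s_i P_{\mathrm{mix}}(k\mid x)$ and $\Delta z_{i,y}\approx \eta s_i(1-P_{\mathrm{mix}}(y\mid x))$, which is Eq.~\eqref{eq:main_update}.

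The main obstacle is justifying the diagonal-kernel step rather than the algebra. Off-diagonal entries of $K_i$ couple tokens that share output embeddings or hidden features, so they can redistribute the update between tokens. We would state explicitly that the proposition is exact under $K_i = cI$ and holds up to $O(\eta\,\|K_i - cI\|)$ corrections otherwise, plus $O(\eta^2)$ linearization error. A sanity check supports the diagonal approximation: if only the final-layer bias were trained, $J_i$ would be the identity and the result would hold exactly. We would also note that, combined with the Corollary, the negative-token update magnitude $P_{\mathrm{mix}}(k\mid x)$ exceeds the SFT counterpart $P_{\mathrm{strong}}(k\mid x)$ on hard negatives in $\mathcal{H}$. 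The strong branch's effective step is then $\lambda P_{\mathrm{mix}}(k\mid x)$, so the amplification relative to SFT requires $\lambda P_{\mathrm{mix}}(k\mid x) \ge P_{\mathrm{strong}}(k\mid x)$. We would remark on this, but it is not part of the statement.
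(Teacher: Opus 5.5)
Your proposal is correct and follows the paper's argument. The chain rule through the linear mix gives $\nabla_{z_i}\mathcal{L}_{\mathrm{mix}}=s_i\,g_{\text{mix}}$, and the parameter-space linearization gives $\Delta z_i\approx-\eta s_iK_ig_{\text{mix}}$, as in Appendix~\ref{sec:appendix_joint_theory}; setting $K_i\approx I$, which the proof in Appendix~\ref{sec:appendix_grad_amp} encodes directly as the logit-space step $\Delta z_i\approx-\eta\nabla_{z_i}\mathcal{L}_{\mathrm{mix}}$, then yields both coordinate formulas. One small caveat on your generalization: absorbing $K_i\approx c_iI$ into $\eta$ is harmless only when $c_{\mathrm{weak}}=c_{\mathrm{strong}}$, because the statement uses a single $\eta$ for both branches.
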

Any increase in $P_{\mathrm{mix}}(k\mid x)$ amplifies the local gradient signal on hard negatives in the paired objective, while a decrease in $P_{\mathrm{mix}}(y\mid x)$ strengthens the upward push on the target logit. The full parameter-space update passes through the Jacobian Gram matrix and is derived in Appendix~\ref{sec:appendix_joint_theory}; the simplified form above states the token-level direction of the cross-entropy signal. These dynamics are consistent with the empirical reduction of the non-target logit mean observed in our logit statistics.

\input{table/main_results}

\paragraph{Mechanistic interpretation.}
The result explains the ``why'' of WD-JT: weak logits can expose weak-revealed hard negatives inside a supervised CE objective, and the resulting larger negative-token probabilities produce stronger corrective updates against those tokens. Appendix~\ref{sec:appendix_joint_theory} provides additional dynamics diagnostics for branch sensitivity, confident-regime shielding, and mean-logit drift, while Appendix~\ref{app:theory_discussion} discusses the validity of the margin conditions and multi-step dynamics.

%% file: table/main_results.tex
\begin{table*}[t!]
    \centering
    \small
    \renewcommand{\arraystretch}{0.96}
    \caption{\textbf{Main results.} pass@1 accuracy (\%) under greedy decoding. \textbf{All methods are trained for two epochs}; \modelname{} uses the same two-epoch total budget. Additional results on different domains (logic) and model families (Gemma) are reported in Appendix~\ref{app:logic_results} and Appendix~\ref{app:more_models}.}
    \label{tab:main_results}
    \resizebox{\textwidth}{!}{
    \begin{tabular}{@{}l|c c c c c c c c|c c c c c}
        \toprule
        \multirow{2}{*}{\textbf{Methods}} & \multicolumn{8}{c|}{\textbf{Math}} &  \multicolumn{5}{c}{\textbf{Code}} \\
        \cmidrule(l){2-9}\cmidrule(l){10-14}
         &\textbf{AIME2025} & \textbf{MATH500} & \textbf{AMC23} & \textbf{AQUA} & \textbf{GSM8K} & \textbf{MAWPS} & \textbf{SVAMP}& Avg. & \textbf{HE+} & \textbf{MBPP+} & \textbf{BCB} & \textbf{LCB} & Avg. \\
        \midrule
        \multicolumn{14}{c}{\textit{Qwen3-4B-Base as base model.}} \\
        \midrule
         SFT & 13.3 & 66.2 & \underline{47.5} & 61.8 & 83.9 & 91.2 & 85.5 & 64.2 & 73.8 & 64.0 & 38.5 & 17.8 & 48.5\\
         UNDIAL & 13.3 & 63.2 & 40.0 & 55.1 & 82.9 & 90.8 & 84.1 & 61.3 \textcolor{green!50!black}{$\downarrow$2.9} & 72.6 & 66.8 & 39.6 & 17.5 & 49.1 \textcolor{red}{$\uparrow$0.6}\\
         NEFTune & 16.7 & 68.2 & 42.5 & 59.1 & 86.7 & 93.7 & 87.7 & 64.9 \textcolor{red}{$\uparrow$0.7} & 75.0 & 66.9 & 40.3 & 16.0 & 49.6 \textcolor{red}{$\uparrow$1.1}\\
         SPIN & \underline{20.0} & \underline{70.8} & \underline{47.5} & \underline{66.9} & 85.8 & 87.4 & 87.6 & \underline{66.6 \textcolor{red}{$\uparrow$2.4}} & 76.2 & 66.7 & 39.1 & \underline{18.8} & 50.2 \textcolor{red}{$\uparrow$1.7}\\
         SSB & 10.0 & 70.6 & 45.0 & 66.1 & \underline{87.1} & \underline{95.4} & \underline{88.8} & 66.1 \textcolor{red}{$\uparrow$1.9} & \underline{78.0} & \underline{67.4} & \underline{42.5} & 18.0 & \underline{51.5 \textcolor{red}{$\uparrow$3.0}}\\
         \textbf{\modelname} & \textbf{20.0} & \textbf{71.4} & \textbf{50.0} & \textbf{67.7} & \textbf{88.5} & \textbf{96.2} & \textbf{90.3} & \textbf{69.1 \textcolor{red}{$\uparrow$4.9}} & \textbf{78.7} & \textbf{68.2} & \textbf{43.5} & \textbf{21.0} & \textbf{52.9 \textcolor{red}{$\uparrow$4.4}}\\
        \midrule
        \multicolumn{14}{c}{\textit{Qwen3-8B-Base as base model.}} \\
        \midrule
         SFT & 16.7 & 72.2 & 45.0 & 63.0 & 87.5 & 95.0 & 88.6 & 66.9 & 80.5 & 69.2 & 44.5 & 21.0 & 53.8\\
         UNDIAL & 10.0 & 71.6 & \underline{47.5} & 66.9 & 89.9 & 96.6 & 91.2 & 67.6 \textcolor{red}{$\uparrow$0.7} & 79.9 & 69.4 & 45.7 & 21.5 & 54.1 \textcolor{red}{$\uparrow$0.3}\\
         NEFTune & 16.7 & 73.8 & 40.0 & 71.3 & \underline{90.3} & \underline{97.5} & 91.5 & 68.7 \textcolor{red}{$\uparrow$1.8} & 81.1 & \underline{70.4} & 45.9 & 21.0 & 54.6 \textcolor{red}{$\uparrow$0.8}\\
         SPIN & 16.7 & 72.4 & \underline{47.5} & \textbf{78.3} & 88.8 & 95.4 & 92.7 & 70.3 \textcolor{red}{$\uparrow$3.4} & \underline{81.7} & 70.0 & \underline{46.3} & \underline{21.8} & \underline{55.0 \textcolor{red}{$\uparrow$1.2}}\\
         SSB & 16.7 & \underline{74.6} & \underline{52.5} & 76.4 & 89.3 & 93.3 & \underline{93.0} & \underline{70.8 \textcolor{red}{$\uparrow$3.9}} & 81.1 & 66.9 & 46.0 & 20.8 & 53.7 \textcolor{green!50!black}{$\downarrow$0.1}\\
         \textbf{\modelname} & \textbf{20.0} & \textbf{75.4} & \textbf{52.5} & \underline{77.2} & \textbf{92.5} & \textbf{97.9} & \textbf{94.0} &  \textbf{72.8 \textcolor{red}{$\uparrow$5.9}} & \textbf{84.1} & \textbf{71.4} & \textbf{47.1} & \textbf{22.3} & \textbf{56.2 \textcolor{red}{$\uparrow$2.4}}\\
        \midrule
        \multicolumn{14}{c}{\textit{Qwen2.5-3B as base model.}} \\
        \midrule
         SFT & 3.3 & 47.0 & 20.0 & 51.2 & 76.6 & 91.6 & 84.4 & 53.4 & 58.5 & 59.0 & 27.5 & 11.0 & 39.0\\
         UNDIAL & 3.3 & 50.8 & 17.5 & 49.2 & 77.4 & 91.2 & 83.2 & 53.2 \textcolor{green!50!black}{$\downarrow$0.2} & 56.1 & 59.5 & 28.9 & \underline{12.5} & 39.3 \textcolor{red}{$\uparrow$0.3}\\
         NEFTune & 3.3 & 51.2 & \underline{22.5} & 47.2 & 78.1 & 90.3 & 84.7 & 53.9 \textcolor{red}{$\uparrow$0.5} & 56.1 & 59.5 & 29.6 & 11.5 & 39.2 \textcolor{red}{$\uparrow$0.2}\\
         SPIN & 3.3 & \underline{51.6} & 20.0 & 49.6 & 77.4 & 91.2 & \textbf{86.1} & 54.1 \textcolor{red}{$\uparrow$0.7} & \underline{59.8} & \underline{60.6} & \underline{31.4} & 12.0 & \underline{41.0 \textcolor{red}{$\uparrow$2.0}}\\
         SSB & 3.3 & 51.4 & \underline{22.5} & \underline{52.0} & \underline{79.9} & \underline{92.0} & 84.9 & \underline{55.1 \textcolor{red}{$\uparrow$1.7}} & 54.3 & 52.9 & 30.4 & 9.3 & 36.7 \textcolor{green!50!black}{$\downarrow$2.3}\\
         \textbf{\modelname} & \textbf{6.7} & \textbf{52.0} & \textbf{22.5} & \textbf{53.9} & \textbf{80.9} & \textbf{92.4} & \underline{86.0} &  \textbf{56.3 \textcolor{red}{$\uparrow$2.9}} & \textbf{62.2} & \textbf{61.9} & \textbf{32.7} & \textbf{12.5} & \textbf{42.3 \textcolor{red}{$\uparrow$3.3}}\\
        \bottomrule
    \end{tabular}
    }

\end{table*}

%% file: content/4_experiments.tex
\section{Experiments}
\vspace{0.35\baselineskip}

This section presents the setup, main comparisons, module ablations, convergence dynamics, and WD-DS step-reduction analysis. Additional diagnostics are provided in Appendix~\ref{appendix: additionalexperiments}.

\subsection{Experimental setup}
\label{sec:experimental_setup}

We evaluate \modelname{} on mathematical reasoning, code generation, and logical reasoning against standard SFT, UNDIAL~\citep{dong2024undialselfdistillationadjustedlogits}, NEFTune~\citep{jain2023neftunenoisyembeddingsimprove}, SPIN~\citep{chen2024self}, and SSB~\citep{mitra2025semanticsoft}. \textbf{All compared models are trained for the same two-epoch budget.} \textbf{Because \modelname{} is an offline/off-policy post-training method, we focus the main comparison on offline or off-policy baselines for a fair comparison}. These methods use fixed supervision, alternate-context targets, or responses generated before the current update, rather than on-policy RL rollouts whose results depend strongly on online sampling and reward-model budgets. More details are provided in Appendix~\ref{app:dataconstruction}.

\subsection{Main results}
\label{sec:main_results}

\noindent\textbf{(1) \modelname{} gives a larger gain over the same two-epoch SFT baseline.}
We use the two-epoch SFT row as the reference and report improvements over it. On the three Qwen math blocks in Table~\ref{tab:main_results}, non-\modelname{} post-training baselines improve Math Avg by $+1.2$ points on average over SFT, whereas \modelname{} improves by $+4.6$ points on average, more than $3\times$ the mean competing gain. The same comparison is visible per backbone: Qwen3-4B-Base improves from $64.2\%$ to $\mathbf{69.1\%}$ ($+4.9$), Qwen3-8B-Base from $66.9\%$ to $\mathbf{72.8\%}$ ($+5.9$), and Qwen2.5-3B from $53.4\%$ to $\mathbf{56.3\%}$ ($+2.9$). Code and logic show the same direction: Qwen3-4B-Base gains $+4.4$ Code Avg and $+4.0$ Logic Avg, while Qwen3-8B-Base gains $+2.4$ Code Avg and $+7.4$ Logic Avg. Per-base logic results (LogiQA~2.0, ReClor) are tabulated in Table~\ref{tab:logic_results} of Appendix~\ref{app:logic_results}.

\noindent\textbf{(2) \modelname{} extends across the tested 3B--8B backbones and Qwen / Gemma families.}
The Math-Avg gain over the two-epoch SFT baseline is positive on every tested base: Qwen3-4B-Base $+4.9$, Qwen3-8B-Base $+5.9$, and Qwen2.5-3B $+2.9$ (Table~\ref{tab:main_results}); Gemma-3-4B-PT $+4.4$ (Table~\ref{tab:main_results_more_models} of Appendix~\ref{app:more_models}). In these single-run comparisons, \modelname{} obtains the strongest Math Avg among the listed baselines, suggesting that weak-driven logit mixing continues to surface useful corrective signal within this scale range. Appendix~\ref{app:checkpoint_pairing} also identifies a practical condition: the weak agent should retain sufficient entropy separation from the strong agent, since too-close pairings collapse toward averaging similar distributions.

\clearpage
\subsection{Module ablation: do WD-DS and WD-JT both matter?}
\label{sec:module_ablation}
\begin{wraptable}{r}{0.46\linewidth}
    \centering
    \input{table/ablation}
    \captionsetup{width=0.98\linewidth, justification=centering, singlelinecheck=false}
    \caption{\textbf{Ablation of individual modules}\\\textbf{of \modelname{} (Qwen3-4B-Base).}}
    \label{tab:ablation_compact}
\end{wraptable}
Table~\ref{tab:ablation_compact} isolates the contribution of the two modules on Qwen3-4B-Base. WD-DS alone improves the three-benchmark average from $54.4\%$ to $56.3\%$, showing that discrepancy-based selection already identifies more useful training examples. WD-JT alone reaches $58.2\%$ by converting weak-model discrepancies into corrective logit-mixing gradients. Combining WD-DS with WD-JT gives the full \modelname{} result of $59.9\%$ and improves AIME2025 from $13.3\%$ to $20.0\%$, indicating that WD-JT provides the main corrective training signal while WD-DS improves where that signal is spent.

\begin{figure*}[t]
\centering
\captionsetup{skip=2pt}
\begin{minipage}[t]{0.48\textwidth}
    \centering
    \includegraphics[width=\linewidth,height=3.25cm,keepaspectratio]{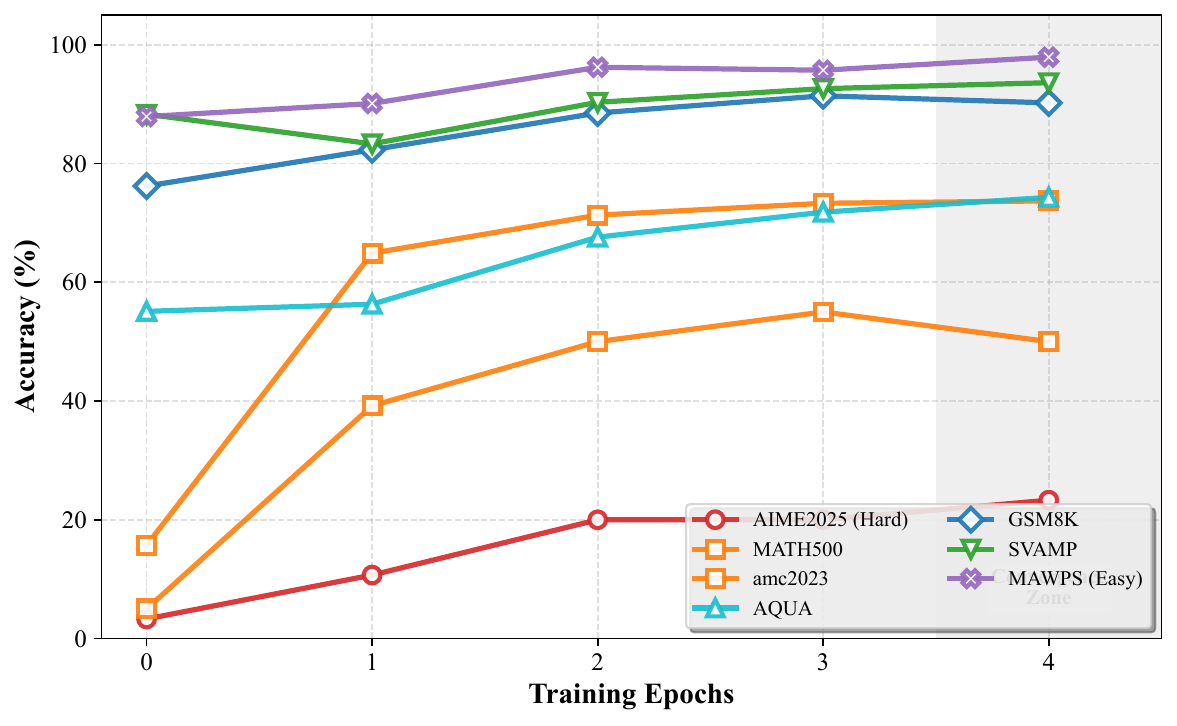}
    \caption{\textbf{Convergence of \modelname{} over four epochs.} The trajectory is reported as an extended diagnostic beyond the fixed two-epoch budget used for the main comparison.}
    \label{fig:convergence}
\end{minipage}
\hfill
\begin{minipage}[t]{0.48\textwidth}
    \centering
    \resizebox{0.90\linewidth}{!}{\input{content/ds_speed_tikz}}
    \caption{\textbf{Faster rise with WD-DS.} Avg. of MATH500 and GSM8K after the shared initialization; w/ and w/o denote with and without WD-DS.}
    \label{fig:ds_speed}
\end{minipage}
\vspace{-0.6\baselineskip}
\end{figure*}

\subsection{Convergence dynamics: when does \modelname{} converge?}
\label{sec:convergence_dynamics}
Figure~\ref{fig:convergence} visualizes an extended trajectory across seven math datasets beyond the two-epoch main comparison. The curves show rapid acquisition followed by stabilization: gains accumulate early, then marginal utility diminishes. The final stage exhibits \textbf{over-optimisation}: \textbf{AMC23} regresses after Epoch~3 and \textbf{GSM8K} shows late-stage volatility, while other datasets plateau. This pattern is compatible with the late-stage drift mechanism in Section~\ref{sec:theoretical_analysis}, but is not used as causal evidence or as a per-benchmark early-stopping rule.

\subsection{Does WD-DS reduce redundant weak-driven steps?}
\label{sec:cost_efficiency}

Weak-driven training adds a weak-model view of each update, so the key question is whether this signal is spent where it matters. WD-DS selects examples where weak--strong disagreement can produce corrective gradients and removes duplicate resampled indices so WD-JT runs on a shorter active schedule. Figure~\ref{fig:ds_speed} starts from the shared initialization checkpoint (MATH500 / GSM8K $=0.658/0.852$). The w/o WD-DS schedule uses 218 steps and reaches $0.694/0.873$; with WD-DS, the active schedule reaches $0.714/0.876$ with 123 steps, about $56\%$ of the full budget, and also surpasses w/o WD-DS at early checkpoints.

%% file: table/ablation.tex
\centering
\scriptsize
\setlength{\tabcolsep}{2.4pt}
\renewcommand{\arraystretch}{1.04}
\begin{tabular}{l c c c c}
    \toprule
    \textbf{Method} & \textbf{AIME} & \textbf{MATH} & \textbf{GSM8K} & \textbf{Avg.} \\
    \midrule
    Baseline & 13.3 & 66.1 & 83.9 & 54.4 \\
    + WD-DS & 13.3 & 69.4 & 86.2 & 56.3 \\
    + WD-JT & 16.7 & 70.2 & 87.6 & 58.2 \\
    \modelname & \textbf{20.0} & \textbf{71.3} & \textbf{88.5} & \textbf{59.9} \\
    \bottomrule
\end{tabular}

%% file: content/ds_speed_tikz.tex
\begin{tikzpicture}[x=1.10cm,y=1.10cm]
    \scriptsize
    \definecolor{fullblue}{RGB}{42,105,190}
    \definecolor{dedupred}{RGB}{190,45,55}
    \definecolor{sftgray}{RGB}{105,112,124}
    \definecolor{gridgray}{RGB}{220,224,230}
    \def\xInit{0.22}
    \def\xFifty{0.85}
    \def\xHundred{1.47}
    \def\xDedupFinal{1.76}
    \def\xOneFifty{2.08}
    \def\xTwoHundred{2.70}
    \def\xFullFinal{2.92}
    \def\ybase{0.34}
    \def\yscale{20.5}
    \foreach \y/\lab in {0.735/73.5,0.755/75.5,0.775/77.5,0.795/79.5} {
        \pgfmathsetmacro{\yy}{\ybase+\yscale*(\y-0.735)}
        \draw[gridgray, line width=0.22pt] (0.12,\yy) -- (3.05,\yy);
        \node[anchor=east] at (0.02,\yy) {\lab};
    }
    \draw[black, line width=0.35pt] (0.12,0.27) -- (3.05,0.27);
    \draw[black, line width=0.35pt] (0.12,0.27) -- (0.12,1.77);
    \foreach \x/\lab in {\xInit/0,\xFifty/50,\xDedupFinal/123,\xFullFinal/218} {
        \draw[black, line width=0.25pt] (\x,0.27) -- (\x,0.22);
        \node[anchor=north] at (\x,0.17) {\lab};
    }
    \node[anchor=north] at (1.58,-0.10) {steps};
    \node[rotate=90, anchor=south] at (-0.50,1.02) {Avg. acc.};
    \pgfmathsetmacro{\sA}{\ybase+\yscale*(0.755-0.735)}
    \pgfmathsetmacro{\fA}{\ybase+\yscale*(0.764-0.735)}
    \pgfmathsetmacro{\fB}{\ybase+\yscale*(0.773-0.735)}
    \pgfmathsetmacro{\fC}{\ybase+\yscale*(0.777-0.735)}
    \pgfmathsetmacro{\fD}{\ybase+\yscale*(0.776-0.735)}
    \pgfmathsetmacro{\fE}{\ybase+\yscale*(0.784-0.735)}
    \pgfmathsetmacro{\dA}{\ybase+\yscale*(0.773-0.735)}
    \pgfmathsetmacro{\dB}{\ybase+\yscale*(0.779-0.735)}
    \pgfmathsetmacro{\dC}{\ybase+\yscale*(0.795-0.735)}
    \pgfmathsetmacro{\sftA}{\ybase+\yscale*(0.745-0.735)}
    \pgfmathsetmacro{\sftB}{\ybase+\yscale*(0.737-0.735)}
    \pgfmathsetmacro{\sftC}{\ybase+\yscale*(0.750-0.735)}
    \pgfmathsetmacro{\sftD}{\ybase+\yscale*(0.751-0.735)}
    \pgfmathsetmacro{\sftE}{\ybase+\yscale*(0.762-0.735)}
    \draw[sftgray, line width=0.55pt, densely dashed]
        plot[smooth, tension=0.28] coordinates {(\xInit,\sA) (\xFifty,\sftA) (\xHundred,\sftB) (\xOneFifty,\sftC) (\xTwoHundred,\sftD) (\xFullFinal,\sftE)};
    \draw[fullblue, line width=0.62pt]
        plot[smooth, tension=0.25] coordinates {(\xInit,\sA) (\xFifty,\fA) (\xHundred,\fB) (\xOneFifty,\fC) (\xTwoHundred,\fD) (\xFullFinal,\fE)};
    \draw[dedupred, line width=0.62pt]
        plot[smooth, tension=0.25] coordinates {(\xInit,\sA) (\xFifty,\dA) (\xHundred,\dB) (\xDedupFinal,\dC)};
    \foreach \x/\y in {\xInit/\sA,\xFifty/\sftA,\xHundred/\sftB,\xOneFifty/\sftC,\xTwoHundred/\sftD,\xFullFinal/\sftE} \fill[sftgray] (\x,\y) circle (1pt);
    \foreach \x/\y in {\xInit/\sA,\xFifty/\fA,\xHundred/\fB,\xOneFifty/\fC,\xTwoHundred/\fD,\xFullFinal/\fE} \fill[fullblue] (\x,\y) circle (1.15pt);
    \foreach \x/\y in {\xInit/\sA,\xFifty/\dA,\xHundred/\dB,\xDedupFinal/\dC} \fill[dedupred] (\x,\y) circle (1.15pt);
    \node[dedupred, anchor=west, font=\tiny, fill=white, inner sep=0.4pt] at (1.20,1.66) {w/ WD-DS};
    \node[fullblue, anchor=west, font=\tiny, fill=white, inner sep=0.4pt] at (2.03,1.50) {w/o WD-DS};
    \node[sftgray, anchor=west, font=\tiny, fill=white, inner sep=0.4pt] at (2.55,0.98) {SFT};
\end{tikzpicture}

%% file: content/5_conclusion.tex
\vspace{-0.8\baselineskip}
\section{Discussion}
We introduced \emph{Weak-Driven Learning}, turning weak historical states into corrective training signal rather than obsolete checkpoints. Weak agents can still expose plausible but wrong alternatives on hard tokens, while the ground-truth cross-entropy target keeps optimization anchored to the correct answer. \modelname{} combines weak-driven logit mixing with discrepancy-aware selection, so extra weak-agent computation is spent where weak--strong differences are most informative.

This makes \modelname{} a practical post-training mechanism: it reuses artifacts already produced during training and requires no online rollouts, external strong teachers, or deployment-time inference changes. Our experiments show consistent gains across math, code, and logic, with WD-JT providing the main corrective signal and WD-DS improving where that signal is applied. Broader transfer and limitations are discussed in Appendix~\ref{app:limitations}.

%% file: content/6_appendex.tex
\section{Limitations and Broader Implications}
\label{app:limitations}

\paragraph{Limitations.}
While \modelname{} introduces no additional inference-time overhead, it uses an additional weak-agent forward pass during training. This makes the method most attractive in settings where modest extra training-time computation is acceptable in exchange for stronger post-training performance at deployment. The method is also designed for regimes where weak and strong agents retain useful token-level discrepancies; if the two agents become nearly identical, the additional corrective signal naturally diminishes. Our experiments focus on structured reasoning and coding tasks, where sharpening decisions against hard negatives is directly aligned with the evaluation objective. Extending the same principle to more open-ended generation settings may require task-adaptive mixing or softer objectives, which we leave to future work.

\paragraph{Broader implications.}
This work studies a post-training alternative in which useful supervision need not always originate from a stronger external source. In structured reasoning settings, our results suggest that weak agents produced during ordinary training can be reused as corrective signal when target-only SFT yields diminishing returns on hard samples. This provides a practical way to extract more value from existing training trajectories without introducing additional deployment-time inference mechanisms or requiring new external teachers. As with other methods that improve reasoning and code-generation ability, \modelname{} should be deployed with the same safeguards appropriate for capable language models.

\clearpage

\section{Additional Experiments}
\label{appendix: additionalexperiments}

\subsection{Freezing the Weak Agent during Weak-Driven Training}
\label{app:frozen_weak_agent}

Table~\ref{tab:frozen_weak_agent} isolates WD-JT without the WD-DS selection module and compares it with a variant that freezes the base weak-side agent. The frozen-base variant performs substantially worse, suggesting that WD-JT benefits from a weak--strong pair produced along the same training trajectory rather than a fixed base reference. When the weak side is jointly trained as a learner, it can also lose mathematical competence, weakening the corrective contrast that WD-JT relies on.

\begin{table}[h]
\centering
\small
\setlength{\tabcolsep}{5pt}
\caption{\textbf{Effect of freezing the weak-side agent.} Results are MATH500/GSM8K pass@1 accuracy (\%) at relative checkpoints after the shared initialization.}
\label{tab:frozen_weak_agent}
\begin{tabular}{c c c}
\toprule
\textbf{ck-rel} & \textbf{WD-JT only (w/o WD-DS)} & \textbf{Frozen base} \\
\midrule
50  & 66.6 / 86.2 & 62.0 / 79.6 \\
100 & 69.0 / 86.4 & 62.2 / 82.9 \\
150 & 68.0 / 86.6 & 63.2 / 85.4 \\
200 & \textbf{69.4 / 87.6} & 63.8 / 84.9 \\
final & 68.8 / 86.4 & 63.2 / 86.1 \\
\bottomrule
\end{tabular}
\end{table}

\subsection{Results on Gemma-3-4B-PT}
\label{app:more_models}

To test the weak-driven mixed-logit mechanism beyond Qwen-series models, we extend the main-paper benchmark (Table~\ref{tab:main_results}) to Gemma-3-4B-PT and report results in Table~\ref{tab:main_results_more_models}. \modelname{} attains the best Math-Avg ($+4.4$ over SFT) and Code-Avg ($+3.9$ over SFT).
\input{table/main_results_more_models}

\subsection{Logic-Reasoning Results}
\label{app:logic_results}

To keep the main-paper Table~\ref{tab:main_results} focused on math and code, we report the logic-reasoning benchmarks (LogiQA~2.0 and ReClor) separately in Table~\ref{tab:logic_results}. Across all four base models, \modelname{} attains the best LogiQA~2.0, ReClor, and logic-average accuracy, suggesting that the weak-driven mixed-logit mechanism extends beyond numerical and coding domains within structured reasoning tasks.
\input{table/logic_results}

\subsection{Logit statistics: does WD-JT suppress hard negatives?}
\label{sec:logit_statistics}
\input{table/logit_analysis}
Table~\ref{tab:logit_analysis} provides a logit-space diagnostic consistent with the mechanism in Section~\ref{sec:theoretical_analysis}. After standard SFT, target logits are already large, so further improvement is unlikely to come mainly from pushing the correct token higher. Instead, \modelname{} reduces the non-target logit mean from $2.09$ to $0.90$ ($-56.9\%$), while the target logit changes only slightly ($35.88\to36.10$). This suppression-dominant behavior expands the target-to-background gap by $+1.41$ logit points; we treat it as mechanistic evidence consistent with WD-JT increasing gradient signal on weak-revealed hard negatives, rather than as standalone causal proof.

To provide a more complete view of the same dynamics, Table~\ref{tab:logit_analysis_extended} reports additional statistics from the logit analysis report. Beyond the non-target logit mean reduction above, the extended metrics compare the frozen weak reference with the strong branch before and after WD-JT; only the strong branch is updated.
\paragraph{Implementation details.}
All logit statistics are computed from 200 randomly sampled examples drawn from the AM-1.4M training dataset used in our experiments. We run forward passes for the frozen weak reference and for the strong pre-/post-WD-JT checkpoints on these examples and aggregate statistics across the sampled set. The centered-norm definition used in the table follows Appendix~\ref{sec:appendix_joint_theory} (Eq.~\eqref{eq:appendix_centered_norm}).
\input{table/logit_analysis_extended}

\subsection{Weak--strong entropy separation}
\label{app:checkpoint_pairing}

We fix the strong branch to \texttt{ck-218} and vary the weak-side checkpoint $m_1$ to diagnose entropy separation. When $\Delta H$ is small, the two branches have similar uncertainty patterns and logit mixing adds little corrective signal; \texttt{ck-200} drops to $0.484/0.742$. Once the gap is sufficient, the weak branch can supply uncertainty without destabilizing training because CE still anchors the target. The close results of base and \texttt{ck-50} suggest a threshold effect: sufficient separation matters, but larger gaps do not continuously improve fusion.

\begin{figure}[H]
    \centering
    \includegraphics[width=0.78\linewidth]{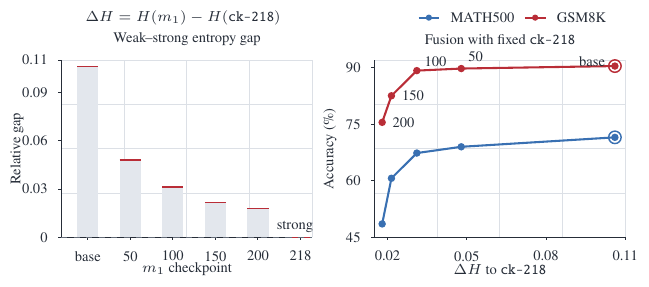}
    \caption{\textbf{Diagnostic of weak--strong entropy separation.} Small-gap pairings degrade, while base and \texttt{ck-50} remain close, suggesting a threshold rather than monotonic gains from larger $\Delta H$.}
    \label{fig:checkpoint_pairing}
\end{figure}


\subsection{Fixed hyperparameters and post-hoc sensitivity}
\label{app:hparam_protocol}

To avoid benchmark-level model selection, the headline results in Table~\ref{tab:main_results} use a single fixed \modelname{} setting across all reported models and domains: $\alpha=0.1$, $\beta=0.8$, $\gamma=0.1$, and $\lambda=0.5$. These values are treated as method defaults rather than per-benchmark tuned hyperparameters. In particular, we do not replace the main results with the best configuration found in the sweeps below; the $\lambda$ sweep later peaks at $\lambda=0.42$, while the headline runs remain at the fixed default $\lambda=0.5$. Tables~\ref{tab:hyperparam_sensitivity} and~\ref{tab:sensitive_of_lambda} are therefore intended as post-hoc sensitivity analyses that characterize robustness around the default operating point, not as a model-selection procedure for the systems reported in Table~\ref{tab:main_results}. Three observations support this interpretation:

\textbf{(i) The default operating point is not at a sharp peak.} The $\lambda$ sweep (Table~\ref{tab:sensitive_of_lambda}) shows a broad inverted-U with a plateau over $\lambda \in [0.42, 0.50]$, and the $(\alpha,\beta,\gamma)$ sensitivity (Table~\ref{tab:hyperparam_sensitivity}) places the default configuration within $\sim$2\,pp Math-Avg of the neighboring configurations tested. Points inside the plateau yield results close to the fixed-default headline.

\textbf{(ii) Non-default hyperparameter choices remain competitive.} The worst-performing configuration in our $\lambda$ sweep ($\lambda{=}0.9$, Math Avg.\ $67.6\%$ on Qwen3-4B-Base) still exceeds the strongest non-\modelname{} baseline in Table~\ref{tab:main_results} (SSB, $66.1\%$) by $\sim$1.5\,pp and exceeds plain SFT ($64.2\%$) by $>$3\,pp. This suggests that the observed gain is not solely a product of a narrow hyperparameter peak.

\textbf{(iii) The high-performing $\lambda$ region is consistent with the theory.} The centered-norm crossover formula in Appendix~\ref{sec:appendix_joint_theory} (Eq.~\eqref{eq:appendix_lambda_cross_est}) predicts a high-performing mixing region near $\lambda \approx 0.49$ from the empirical centered-norm ratio of the two agents, in close agreement with the observed plateau around $\lambda \in [0.42,0.50]$. This is a mechanism-level consistency check, not a selection rule for the headline models.

\subsection{Sensitivity of Difficulty, Consolidation, and Repair Coefficients.}
\input{table/hyperparam_sensitivity}
We further investigate the impact of the consolidation coefficient $\alpha$, the difficulty coefficient $\beta$, and the regression-repair coefficient $\gamma$. Table~\ref{tab:hyperparam_sensitivity} reports a post-hoc sensitivity check on Qwen3-4B-Base and Qwen2.5-3B. On Qwen3-4B-Base we observe a distinct trade-off between standard mathematical proficiency (MATH 500) and complex reasoning capability (AIME 2025).

Configuration C ($\alpha=0.1, \beta=0.9, \gamma=0$), which disables the regression-repair signal, achieves the highest accuracy on MATH 500 (\textbf{70.2\%}). However, its performance on the more challenging AIME benchmark drops significantly to $10.3\%$. This suggests that while stronger emphasis on base difficulty ($\beta=0.9$) helps with standard problems, it can still lead to optimization stagnation on harder tasks without regression repair.

In contrast, the fixed default configuration ($\alpha=0.1, \beta=0.8, \gamma=0.1$) introduces a controlled regression-repair emphasis. Although this slightly reduces MATH performance ($70.2\% \rightarrow 68.2\%$), it yields a notable \textbf{+6.4\%} gain on AIME ($10.3\% \rightarrow 16.7\%$). This suggests that the additional signal induced by $\gamma$ is useful for harder reasoning cases and that the default setting is not an isolated peak chosen solely from the aggregate benchmark score.

\paragraph{Qwen2.5-3B.} On the smaller Qwen2.5-3B base, this post-hoc sweep does not register any correct AIME~2025 answer across the six tested $\alpha/\beta/\gamma$ variants (shown as ``-'' in Table~\ref{tab:hyperparam_sensitivity}), so the variants cannot be discriminated on this benchmark. The MATH 500 column nevertheless shows that the fixed default ($\alpha=0.1,\beta=0.8,\gamma=0.1$) remains competitive and attains the best score (\textbf{48.2\%}), outperforming the regression-repair-disabled Config~C ($47.6\%$) and the difficulty-only Config~D ($47.0\%$). The relative gap between the default and Config~C is smaller on 3B than on 4B (+0.6 vs.\ $-2.0$ on MATH 500), consistent with the interpretation that the regression-repair term is most useful when the base model can recover hard-task signal in the first place.

\subsection{Sensitivity Analysis on Mixing Coefficient \texorpdfstring{$\lambda$}{lambda}}
The mixing coefficient $\lambda$ explicitly controls the relative weight between the weak and strong logits in mixed-logit training, with fused logits $z_{\text{mix}} = (1-\lambda) z_1 + \lambda z_2$. Smaller $\lambda$ assigns more influence to the weaker model, while larger $\lambda$ emphasizes the stronger model's direct fit to the targets.

We sweep $\lambda$ over $[0.1, 0.9]$ with a finer grid in $[0.4, 0.6]$ after the headline experiments have been fixed. Table~\ref{tab:sensitive_of_lambda} shows a broad inverted U-shape: the best average performance occurs at $\lambda=0.42$ (Avg.\ $75.5\%$), and a strong plateau persists in $\lambda \in [0.42, 0.50]$, which includes the fixed default $\lambda=0.5$. All benchmarks are evaluated with three-run averages; minor fluctuations on small sets (e.g., AIME25, 30 problems) are still expected, but the high-performing region remains stable.

At the extremes, the behavior aligns with the mechanism. As $\lambda \to 1$, $z_{\text{mix}} \approx z_2$ and the mixed-logit objective reduces to relying on the strong model, so the compensatory interaction weakens and accuracy drops (e.g., Avg.\ $67.6\%$ at $\lambda=0.9$). As $\lambda \to 0$, the weak model dominates the fused logits, reducing effective target learning and risking underfitting; this is especially pronounced for weaker base models (e.g., at $\lambda=0.3$, a weaker model attains only $13\%$ accuracy on MATH500).

\paragraph{Why does the optimum sit near $\lambda \approx 0.42$ rather than $\lambda \to 1$?} A natural intuition is that more weight on the strong model ($\lambda \to 1$) should be better, since the strong model alone is already correct on most tokens. But \modelname{}'s benefit comes from the \emph{weak} agent's contribution to the fused distribution: the weak agent's softer decision boundary preserves probability mass on hard negatives, which is exactly the signal needed on hard samples (Theorem~1). At $\lambda \to 1$ this weak--strong interaction vanishes and the loss reverts to standard target-only SFT. The optimum at $\lambda \approx 0.42$ therefore reflects a balance between two distinct roles: the strong agent contributes the \emph{target direction} (we still want the loss to push toward $y$), while the weak agent contributes \emph{uncertainty over hard negatives}. \modelname{} is not equivalent to a regulariser that shrinks logits toward zero: the weak agent supplies a structurally informed direction, not isotropic noise, which is consistent with our gradient-share crossover prediction near $\lambda \approx 0.49$ (Eq.~\eqref{eq:appendix_lambda_cross_est}).

\paragraph{Numerical consistency with the gradient-share crossover.}
Following the theoretical analysis in Appendix~\ref{sec:appendix_joint_theory}, we can estimate the effective sensitivity ratio using the centered-norm proxy:
\begin{equation}
    \alpha \approx \left(\frac{\|\tilde{z}_2\|_2}{\|\tilde{z}_1\|_2}\right)^2.
    \label{eq:appendix_alpha_est}
\end{equation}
From the logits evaluation report, the strong pre-update centered norm is $\|\tilde{z}_2\|_2 \approx 1240.10$ and the weak-side initialization centered norm is $\|\tilde{z}_1\|_2 \approx 1191.33$, giving $\alpha \approx 1.08$.
Plugging into the crossover formula in Eq.~\eqref{eq:appendix_lambda_cross},
\begin{equation}
    \lambda_{\text{cross}} \approx \frac{1}{1+\sqrt{\alpha}} \approx 0.490,
    \label{eq:appendix_lambda_cross_est}
\end{equation}
which lies close to the empirically strongest region ($\lambda \in [0.42, 0.50]$). We stress that this is a \emph{heuristic consistency check}: $\alpha$ is phase-dependent and the linearization is local, so the theory predicts a broad optimum region rather than a sharp inversion point.
\input{table/all_lambda}

\subsection{Pass@\textit{k} Comparison with DPO}
\label{app:pass_at_k}

To compare \modelname{} against a preference-based baseline under multi-sample evaluation, we evaluate both \modelname{}\,(Qwen3-4B-Base) and a DPO baseline trained from the same SFT warm-start on the same math data. The DPO preference pairs are constructed by sampling $4$ rollouts per prompt at $T{=}0.8$ from the SFT-warm-start policy and using \texttt{math\_verify} against the ground-truth boxed answer to label each rollout as correct or incorrect. We use the first verified correct rollout in the fixed sampling order as the chosen response and the first verified incorrect rollout as the rejected response; prompts with no valid pair are skipped. No external reward model or process-reward signal is used. For evaluation, we draw $n=64$ rollouts per problem at $T=0.7$, top-$p=0.95$, and report the unbiased pass@$k$ estimator of \citet{chen2021evaluatinglargelanguagemodels}, $\widehat{\mathrm{pass}@k}=\mathbb{E}_{\text{problem}}[1-\binom{n-c}{k}/\binom{n}{k}]$ (where $c$ is the number of correct rollouts), for $k\in\{1,4,8,16,32,64\}$ on AIME~2025, AMC23, and MATH500.

\begin{table}[h]
\centering
\small
\setlength{\tabcolsep}{4pt}
\caption{\textbf{Pass@$k$ evaluation of \modelname{} vs.\ DPO on Qwen3-4B-Base.} Sampling at $T=0.7$, top-$p=0.95$, $n=64$ samples per problem; the $k{=}1$ entry is equivalent to mean@64. \textbf{Bold} marks the higher value in each (benchmark, $k$) cell.}
\label{tab:pass_at_k}
\begin{tabular}{c c c c c c c}
\toprule
\multirow{2}{*}{$k$} & \multicolumn{2}{c}{\textbf{AIME~2025}} & \multicolumn{2}{c}{\textbf{AMC23}} & \multicolumn{2}{c}{\textbf{MATH500}} \\
\cmidrule(lr){2-3} \cmidrule(lr){4-5} \cmidrule(lr){6-7}
 & \modelname{} & DPO & \modelname{} & DPO & \modelname{} & DPO \\
\midrule
1  & \textbf{17.97} & 14.20 & \textbf{49.49} & 42.80 & \textbf{75.32} & 70.90 \\
4  & \textbf{24.69} & 22.40 & \textbf{63.25} & 56.70 & \textbf{84.36} & 84.00 \\
8  & \textbf{26.59} & 24.90 & \textbf{68.06} & 61.00 & 87.20 & \textbf{87.40} \\
16 & \textbf{28.57} & 27.30 & \textbf{71.79} & 65.20 & 89.48 & \textbf{89.90} \\
32 & \textbf{30.83} & 29.10 & \textbf{75.13} & 69.80 & 91.49 & \textbf{91.80} \\
64 & \textbf{33.33} & 30.00 & \textbf{77.50} & 75.00 & 93.20 & \textbf{93.40} \\
\bottomrule
\end{tabular}
\end{table}

\paragraph{\modelname{} outperforms DPO on hard reasoning benchmarks across $k$.}
On AIME~2025 and AMC23 \modelname{} beats DPO at every $k$: $+3.8$\,pp at pass@1 ($17.97$ vs.\ $14.20$) and $+6.7$\,pp at pass@1 ($49.49$ vs.\ $42.80$) respectively, with the lead remaining positive at the diversity ceiling $k{=}64$ ($+3.3$ on AIME, $+2.5$ on AMC23). On the easier MATH500, \modelname{} leads at pass@1 / pass@4 ($+4.4$ / $+0.4$) but DPO catches up for $k\ge 8$ within $0.4$\,pp as both methods approach the $90$\,\% ceiling. This pattern is consistent with \modelname{} helping most where hard-reasoning gradients remain informative, while preference-based diversity can close the gap on easier benchmarks at larger $k$.

\paragraph{The advantage is sharpest at low $k$.}
The relative gap between \modelname{} and DPO is largest at small $k$ and shrinks as $k$ grows: on AIME~2025 the ratio (\modelname{}/DPO) is $1.27\times$ at $k{=}1$ and $1.11\times$ at $k{=}64$; on AMC23 it shrinks from $1.16\times$ to $1.03\times$; on MATH500 from $1.06\times$ to $1.00\times$. This suggests that \modelname{} improves the single-decode regime, which is relevant when best-of-$N$ rejection sampling is expensive. This is consistent with the gradient-amplification analysis in Section~\ref{sec:theoretical_analysis}: by increasing gradient signal on weak-revealed hard negatives, \modelname{} may sharpen the per-token decision boundary and improve top-1 accuracy.

\paragraph{Diversity gain (pass@$64$ -- pass@$1$).}
The pass@$k$ growth $\Delta = \mathrm{pass@64} - \mathrm{pass@1}$ measures how much additional problem coverage the model finds via diverse sampling. \modelname{} grows by $\Delta_{\text{AIME}}{=}{+15.4}$, $\Delta_{\text{AMC23}}{=}{+28.0}$, $\Delta_{\text{MATH500}}{=}{+17.9}$\,pp; DPO grows by $\Delta_{\text{AIME}}{=}{+15.8}$, $\Delta_{\text{AMC23}}{=}{+32.2}$, $\Delta_{\text{MATH500}}{=}{+22.5}$. DPO has higher diversity gains across the board, while \modelname{} starts from a stronger pass@1 baseline on AIME and AMC23; as a result, DPO's additional sampling does not close the absolute gap on those two hard benchmarks by pass@$64$.

\subsection{Training Cost and WD-DS Runtime}
\label{app:wd_ds_runtime}

Table~\ref{tab:baseline_runtime} reports measured wall-clock time for timed post-training runs on Qwen3-4B-Base after the same SFT warm-start. These numbers are implementation-level measurements, not a FLOP-normalized comparison: rollout-heavy methods include their own generation stages, while \modelname{} includes the WD-DS active schedule and dual forward pass during WD-JT. The \modelname{} entry uses the deduplicated WD-DS schedule reported in Section~\ref{sec:cost_efficiency}, replacing the earlier full-schedule runtime with the measured ck-123 runtime.

\begin{table}[h]
\centering
\small
\setlength{\tabcolsep}{5pt}
\caption{\textbf{Measured training time on Qwen3-4B-Base.} Wall-clock reports total measured training time after the shared SFT warm-start; it is not FLOP-normalized across methods with different rollout or dual-forward costs. Benefit is the Math-Avg pass@1 gain over SFT in Table~\ref{tab:main_results}.}
\label{tab:baseline_runtime}
\begin{tabular}{l c c}
\toprule
Method & Total training time & Math-Avg $\Delta$ \\
\midrule
SFT       & 4h30m        & -- \\
NEFTune   & 4h30m        & $+0.7$ \\
UNDIAL    & 4h31m        & $-2.9$ \\
SSB       & $\sim$18h    & $+1.9$ \\
SPIN      & $\sim$7h53m  & $+2.4$ \\
\textbf{\modelname{} (dedup WD-DS)} & \textbf{4h49min} & $\mathbf{+4.9}$ \\
\bottomrule
\end{tabular}
\end{table}

\newpage
\section{Supplementary Theory: Gradient Amplification under Weak-Driven Logit Mixing}
\label{sec:appendix_grad_amp}

\paragraph{Notation.}
Fix a context $x$ and a target token index $y$, and analyze a single mixed-logit training step; we omit the conditioning on $x$ for readability.
The weak and strong agents $\mathcal{M}_{\mathrm{weak}}$ and $\mathcal{M}_{\mathrm{strong}}$ produce logits $z_{\mathrm{weak}}(x),z_{\mathrm{strong}}(x)\in\mathbb{R}^{|\mathcal{V}|}$, and the mixed logits are
\begin{equation}
    z_{\text{mix}}=(1-\lambda)z_{\mathrm{weak}}+\lambda z_{\mathrm{strong}},\quad \lambda\in[0,1].
    \label{eq:appendix_grad_mix}
\end{equation}
For any logit map $z(x)$, define $P_z(\cdot\mid x)=\mathrm{Softmax}(z(x))$ and use the shorthand $P_z(k)$; let $P_{\mathrm{mix}}=P_{z_{\text{mix}}}$ and $e_y$ denote the one-hot target vector.
For any negative token $k\neq y$, the target margin is $m_k(z)=z[y]-z[k]$.

\subsection{Setup and Baseline}
We compare the mixed-logit loss with standard SFT on the strong agent:
\begin{equation}
    \mathcal{L}_{\mathrm{mix}}=-\log P_{\mathrm{mix}}(y),\qquad
    \mathcal{L}_{\mathrm{SFT}}=-\log P_{\mathrm{strong}}(y),
    \label{eq:appendix_grad_ce}
\end{equation}
where $P_{\mathrm{strong}}=P_{z_{\mathrm{strong}}}$.
The gradient with respect to the fused logits is
\begin{equation}
    g_{\text{mix}}=\nabla_{z_{\text{mix}}}\mathcal{L}_{\mathrm{mix}}=P_{\mathrm{mix}}(\cdot)-e_y.
    \label{eq:appendix_grad_residual}
\end{equation}
In standard SFT on the strong agent alone, the corresponding gradient is
\begin{equation}
    g_{\text{sft}}=\nabla_{z_{\mathrm{strong}}}\mathcal{L}_{\mathrm{SFT}}=P_{\mathrm{strong}}(\cdot)-e_y.
    \label{eq:appendix_grad_sft}
\end{equation}
By the chain rule, the branch-level logit gradients induced by the mixed objective are
\begin{equation}
    \nabla_{z_{\mathrm{weak}}}\mathcal{L}_{\mathrm{mix}}=(1-\lambda)g_{\text{mix}},\quad
    \nabla_{z_{\mathrm{strong}}}\mathcal{L}_{\mathrm{mix}}=\lambda g_{\text{mix}}.
    \label{eq:appendix_grad_model}
\end{equation}

\subsection{Log-Odds and Margin Contraction}
\begin{definition}[Target margin]
For any negative token $k\neq y$, define
\begin{equation}
    m_k(z)=z[y]-z[k].
    \label{eq:appendix_margin_def}
\end{equation}
\end{definition}

\begin{lemma}[Softmax log-odds]
For any $k\neq y$, the log-odds under logits $z$ satisfy
\begin{equation}
    \log\frac{P_z(k)}{P_z(y)}=-m_k(z).
    \label{eq:appendix_logodds}
\end{equation}
\end{lemma}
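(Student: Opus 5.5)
The plan is to write the softmax explicitly and take the ratio of the two probabilities directly; no earlier result beyond the definitions of $P_z$ and $m_k$ is needed. For logits $z\in\mathbb{R}^{|\mathcal{V}|}$ we have $P_z(v)=\exp(z[v])/Z(z)$ with partition function $Z(z)=\sum_{u\in\mathcal{V}}\exp(z[u])$. The first step is to note that every $P_z(v)$ is strictly positive, because $\exp$ is positive and $Z(z)$ is a finite sum of positive terms. This makes the ratio $P_z(k)/P_z(y)$ and its logarithm well defined.

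Next, form the quotient
\[
\frac{P_z(k)}{P_z(y)}=\frac{\exp(z[k])/Z(z)}{\exp(z[y])/Z(z)}=\exp\bigl(z[k]-z[y]\bigr).
\]
The shared normalizer $Z(z)$ cancels, which is the only substantive point. Taking logarithms gives $\log\frac{P_z(k)}{P_z(y)}=z[k]-z[y]=-(z[y]-z[k])=-m_k(z)$, using the definition of the target margin in Eq.~\eqref{eq:appendix_margin_def}.

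There is no real obstacle here. The only care needed is to justify the positivity of the probabilities and to keep the sign convention of $m_k$ straight. I would also record the immediate consequence used later for the mixed logits: since $m_k$ is linear in $z$, $m_k(z_{\text{mix}})=(1-\lambda)m_k(z_{\mathrm{weak}})+\lambda m_k(z_{\mathrm{strong}})$. Hence $\log\frac{P_{\mathrm{mix}}(k)}{P_{\mathrm{mix}}(y)}$ is the same convex combination of the branch log-odds. This is exactly what the subsequent theorem and corollary exploit.
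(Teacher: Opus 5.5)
Your proof is correct and is the same as the paper's: the shared softmax normalizer cancels, so $P_z(k)/P_z(y)=\exp(z[k]-z[y])=\exp(-m_k(z))$. Your side remarks, on the positivity of the probabilities and on the mixed log-odds being the convex combination of the branch log-odds, are also accurate; the latter is the paper's separate margin-mixing lemma.
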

\begin{proof}
By definition, $P_z(k)/P_z(y)=\exp(z[k]-z[y])=\exp(-m_k(z))$.
\end{proof}

\begin{lemma}[Margin mixing]
For any $k\neq y$, the mixed margin equals the convex combination
\begin{equation}
    m_k(z_{\text{mix}})=(1-\lambda)m_k(z_{\mathrm{weak}})+\lambda m_k(z_{\mathrm{strong}}).
    \label{eq:appendix_margin_mix}
\end{equation}
\end{lemma}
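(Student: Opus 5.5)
The plan is to expand the definition of the margin and use linearity of the mixing map. Since $m_k(z)=z[y]-z[k]$ is a linear functional of the logit vector $z$, it commutes with convex combinations. Concretely, I would write $m_k(z)=\langle e_y-e_k,\,z\rangle$ and then apply it to Eq.~\eqref{eq:appendix_grad_mix}.

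The key steps, in order, are as follows.
\begin{enumerate}
\item Substitute $z_{\text{mix}}=(1-\lambda)z_{\mathrm{weak}}+\lambda z_{\mathrm{strong}}$ coordinatewise. This gives $z_{\text{mix}}[y]=(1-\lambda)z_{\mathrm{weak}}[y]+\lambda z_{\mathrm{strong}}[y]$, and similarly for coordinate $k$.
\item Subtract the two coordinates.
\item Regroup the terms by the coefficients $(1-\lambda)$ and $\lambda$, so that
\[
m_k(z_{\text{mix}})=(1-\lambda)\bigl(z_{\mathrm{weak}}[y]-z_{\mathrm{weak}}[k]\bigr)+\lambda\bigl(z_{\mathrm{strong}}[y]-z_{\mathrm{strong}}[k]\bigr).
\]
\item Recognize the two brackets as $m_k(z_{\mathrm{weak}})$ and $m_k(z_{\mathrm{strong}})$, by the definition in Eq.~\eqref{eq:appendix_margin_def}.
\end{enumerate}

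There is no real obstacle here; the only care needed is bookkeeping. The identity is purely algebraic and holds for every $\lambda\in\mathbb{R}$, not just $\lambda\in[0,1]$. The restriction $\lambda\in[0,1]$ matters only for calling the result a \emph{convex} combination, which is what later arguments use to say the mixed margin lies between the two branch margins. I would note this explicitly, since combining it with the softmax log-odds lemma (Eq.~\eqref{eq:appendix_logodds}) then yields
\[
\log\frac{P_{\mathrm{mix}}(k)}{P_{\mathrm{mix}}(y)}=(1-\lambda)\log\frac{P_{\mathrm{weak}}(k)}{P_{\mathrm{weak}}(y)}+\lambda\log\frac{P_{\mathrm{strong}}(k)}{P_{\mathrm{strong}}(y)},
\]
which is the form needed for the theorem and the corollary downstream.
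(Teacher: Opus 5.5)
Your proposal is correct and follows the paper's own proof: it substitutes $z_{\text{mix}}=(1-\lambda)z_{\mathrm{weak}}+\lambda z_{\mathrm{strong}}$ coordinatewise into $z_{\text{mix}}[y]-z_{\text{mix}}[k]$, regroups by $(1-\lambda)$ and $\lambda$, and recognizes the two branch margins. Your side remarks are also correct: the identity holds for every real $\lambda$, and combining it with the log-odds lemma gives the interpolation of log-odds.
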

\begin{proof}
Using \eqref{eq:appendix_grad_mix} and \eqref{eq:appendix_margin_def}, we expand
\begin{align}
    m_k(z_{\text{mix}})
    &=z_{\text{mix}}[y]-z_{\text{mix}}[k] \nonumber\\
    &=\big[(1-\lambda)z_{\mathrm{weak}}[y]+\lambda z_{\mathrm{strong}}[y]\big]
      -\big[(1-\lambda)z_{\mathrm{weak}}[k]+\lambda z_{\mathrm{strong}}[k]\big] \nonumber\\
    &=(1-\lambda)(z_{\mathrm{weak}}[y]-z_{\mathrm{weak}}[k])
      +\lambda(z_{\mathrm{strong}}[y]-z_{\mathrm{strong}}[k]) \nonumber\\
    &=(1-\lambda)m_k(z_{\mathrm{weak}})+\lambda m_k(z_{\mathrm{strong}}).
    \label{eq:appendix_margin_mix_proof}
\end{align}
\end{proof}

\subsection{Negative-Gradient Amplification}
\begin{definition}[Hard-negative set]
Define the hard-negative set
\begin{equation}
    \mathcal{H}=\{k\neq y:m_k(z_{\mathrm{weak}})<m_k(z_{\mathrm{strong}})\},
    \label{eq:appendix_hard_set}
\end{equation}
i.e., tokens for which the weak agent has a smaller target-vs-negative margin.
\end{definition}

\begin{lemma}[Relative probability increase on hard negatives]
For any $k\in\mathcal{H}$,
\begin{equation}
    \frac{P_{\mathrm{mix}}(k)}{P_{\mathrm{mix}}(y)}
    >
    \frac{P_{\mathrm{strong}}(k)}{P_{\mathrm{strong}}(y)}.
    \label{eq:appendix_rel_prob_increase}
\end{equation}
\end{lemma}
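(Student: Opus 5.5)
The plan is to combine the two preceding lemmas directly. By the softmax log-odds lemma (Eq.~\eqref{eq:appendix_logodds}), applied to $z_{\text{mix}}$ and to $z_{\mathrm{strong}}$ in turn, the inequality \eqref{eq:appendix_rel_prob_increase} is equivalent to
\[
\exp\!\bigl(-m_k(z_{\text{mix}})\bigr) > \exp\!\bigl(-m_k(z_{\mathrm{strong}})\bigr).
\]
Since $\exp$ is strictly increasing, this in turn reduces to the margin inequality $m_k(z_{\text{mix}}) < m_k(z_{\mathrm{strong}})$. The argument is therefore a chain: first pass to log-odds, then to margins, and finally compare margins.

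For the margin comparison I would invoke the margin-mixing lemma (Eq.~\eqref{eq:appendix_margin_mix}) to write
\[
m_k(z_{\text{mix}}) - m_k(z_{\mathrm{strong}}) = (1-\lambda)\bigl(m_k(z_{\mathrm{weak}}) - m_k(z_{\mathrm{strong}})\bigr) = -(1-\lambda)\Delta m_k,
\]
where $\Delta m_k > 0$ precisely because $k \in \mathcal{H}$. As a byproduct, this gives the exact identity
\[
\frac{P_{\mathrm{mix}}(k)}{P_{\mathrm{mix}}(y)} = e^{(1-\lambda)\Delta m_k}\,\frac{P_{\mathrm{strong}}(k)}{P_{\mathrm{strong}}(y)},
\]
which I would record explicitly because it feeds the later per-token corollary.

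The only real obstacle is the endpoint $\lambda = 1$. There $z_{\text{mix}} = z_{\mathrm{strong}}$, the factor $e^{(1-\lambda)\Delta m_k}$ equals $1$, and only equality holds, so the strict inequality fails. I would handle this by stating that the lemma requires $\lambda \in [0,1)$, which is the nondegenerate mixing regime in which the weak branch actually contributes. Under that restriction, $(1-\lambda)\Delta m_k > 0$, strict monotonicity of $\exp$ gives the strict inequality, and the proof is complete. For $\lambda = 1$ I would note that the statement holds with $\ge$ in place of $>$.
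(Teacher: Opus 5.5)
Your proof is correct and follows the paper's argument: the margin-mixing lemma gives $m_k(z_{\text{mix}})<m_k(z_{\mathrm{strong}})$ on $\mathcal{H}$, and the log-odds lemma turns this into the ratio inequality. Your caveat about the endpoint is also right. The paper's one-line proof silently uses $1-\lambda>0$, so the strict inequality holds only for $\lambda\in[0,1)$ and becomes equality at $\lambda=1$, even though the paper allows $\lambda\in[0,1]$.
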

\begin{proof}
By \eqref{eq:appendix_margin_mix}, $m_k(z_{\text{mix}})<m_k(z_{\mathrm{strong}})$ for $k\in\mathcal{H}$.
Applying \eqref{eq:appendix_logodds} yields a larger log-odds ratio for the mixed logits.
\end{proof}

\begin{corollary}[Sufficient condition for per-token amplification]
For any $k\in\mathcal{H}$, the negative-token gradient on fused logits satisfies
\begin{equation}
    P_{\mathrm{mix}}(k)\ge P_{\mathrm{strong}}(k)
    \quad\text{whenever}\quad
    \frac{P_{\mathrm{mix}}(y)}{P_{\mathrm{strong}}(y)}
    \ge \exp\!\left(-(1-\lambda)\Delta m_k\right),
    \label{eq:appendix_amp_condition}
\end{equation}
where $\Delta m_k=m_k(z_{\mathrm{strong}})-m_k(z_{\mathrm{weak}})>0$.
\end{corollary}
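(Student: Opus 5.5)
The plan is to write $P_{\mathrm{mix}}(k)$ as $P_{\mathrm{mix}}(y)$ times its log-odds factor, and then compare that factor directly with its strong-agent counterpart.

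\textbf{Step 1: express each negative probability through its margin.} By the Softmax log-odds lemma, Eq.~\eqref{eq:appendix_logodds}, applied to both $z_{\text{mix}}$ and $z_{\mathrm{strong}}$,
\[
P_{\mathrm{mix}}(k)=P_{\mathrm{mix}}(y)\exp\!\big(-m_k(z_{\text{mix}})\big),
\qquad
P_{\mathrm{strong}}(k)=P_{\mathrm{strong}}(y)\exp\!\big(-m_k(z_{\mathrm{strong}})\big).
\]

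\textbf{Step 2: compare the mixed and strong margins.} The margin-mixing lemma, Eq.~\eqref{eq:appendix_margin_mix}, gives
\[
m_k(z_{\text{mix}})=m_k(z_{\mathrm{strong}})-(1-\lambda)\big(m_k(z_{\mathrm{strong}})-m_k(z_{\mathrm{weak}})\big)=m_k(z_{\mathrm{strong}})-(1-\lambda)\Delta m_k .
\]
The case $P_{\mathrm{strong}}(k)=0$ cannot occur, since softmax probabilities are strictly positive. Dividing the two identities of Step 1 therefore yields the exact ratio
\[
\frac{P_{\mathrm{mix}}(k)}{P_{\mathrm{strong}}(k)}=\frac{P_{\mathrm{mix}}(y)}{P_{\mathrm{strong}}(y)}\,\exp\!\big((1-\lambda)\Delta m_k\big).
\]

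\textbf{Step 3: apply the hypothesis.} Assume $P_{\mathrm{mix}}(y)/P_{\mathrm{strong}}(y)\ge \exp(-(1-\lambda)\Delta m_k)$. Multiplying the ratio in Step 2 by this bound makes its right-hand side at least $1$, so $P_{\mathrm{mix}}(k)\ge P_{\mathrm{strong}}(k)$.

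\textbf{Step 4: translate into gradients.} For $k\neq y$, Eq.~\eqref{eq:appendix_grad_residual} gives $g_{\text{mix}}[k]=P_{\mathrm{mix}}(k)$, and Eq.~\eqref{eq:appendix_grad_sft} gives $g_{\text{sft}}[k]=P_{\mathrm{strong}}(k)$. The inequality from Step 3 is therefore exactly the claimed amplification of the negative-token gradient.

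There is no genuine obstacle here; the only real work is the exact identity in Step 2. It is worth noting that $k\in\mathcal{H}$ means $\Delta m_k>0$, so the threshold $\exp(-(1-\lambda)\Delta m_k)$ is strictly less than $1$. The condition therefore tolerates a genuine drop in target probability under mixing, which is why it can be called mild. The membership $k\in\mathcal{H}$ is not actually needed for the implication itself.
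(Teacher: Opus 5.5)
Your proposal is correct and follows the paper's proof essentially step for step: the log-odds lemma gives the exact identity $P_{\mathrm{mix}}(k)/P_{\mathrm{strong}}(k)=\bigl(P_{\mathrm{mix}}(y)/P_{\mathrm{strong}}(y)\bigr)\exp\!\big((1-\lambda)\Delta m_k\big)$ via margin mixing, and the hypothesis then makes this ratio at least $1$. Your side remark is also right: the implication holds for every $k\neq y$, and membership $k\in\mathcal{H}$ only serves to make the threshold strictly below $1$.
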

\begin{proof}
Using log-odds,
\begin{align}
    \frac{P_{\mathrm{mix}}(k)}{P_{\mathrm{strong}}(k)}
    &=\frac{P_{\mathrm{mix}}(y)\exp(-m_k(z_{\text{mix}}))}
            {P_{\mathrm{strong}}(y)\exp(-m_k(z_{\mathrm{strong}}))} \nonumber\\
    &=\frac{P_{\mathrm{mix}}(y)}{P_{\mathrm{strong}}(y)}
      \exp\!\left(m_k(z_{\mathrm{strong}})-m_k(z_{\text{mix}})\right).
    \label{eq:appendix_ratio_step1}
\end{align}
By \eqref{eq:appendix_margin_mix},
\begin{align}
    m_k(z_{\mathrm{strong}})-m_k(z_{\text{mix}})
    &=(1-\lambda)\big(m_k(z_{\mathrm{strong}})-m_k(z_{\mathrm{weak}})\big) \nonumber\\
    &=(1-\lambda)\Delta m_k.
    \label{eq:appendix_ratio_step2}
\end{align}
Thus
\begin{equation}
    \frac{P_{\mathrm{mix}}(k)}{P_{\mathrm{strong}}(k)}
    =
    \frac{P_{\mathrm{mix}}(y)}{P_{\mathrm{strong}}(y)}
    \exp\!\left((1-\lambda)\Delta m_k\right).
    \label{eq:appendix_ratio_expand}
\end{equation}
Solving for $P_{\mathrm{mix}}(k)\ge P_{\mathrm{strong}}(k)$ gives \eqref{eq:appendix_amp_condition}.
\end{proof}

\begin{theorem}[Total negative-mass increase under uniform margin shrinkage]
If $m_k(z_{\mathrm{weak}})\le m_k(z_{\mathrm{strong}})$ for all $k\neq y$, then
\begin{equation}
    P_{\mathrm{mix}}(y)\le P_{\mathrm{strong}}(y),
    \qquad
    \sum_{k\neq y}P_{\mathrm{mix}}(k)\ge\sum_{k\neq y}P_{\mathrm{strong}}(k).
    \label{eq:appendix_total_neg_mass}
\end{equation}
\end{theorem}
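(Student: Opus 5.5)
The plan is to express the target probability directly through the margins and then use monotonicity. By the Softmax log-odds lemma, $P_z(k)=P_z(y)\exp(-m_k(z))$ for every $k\neq y$. Since probabilities sum to one, this gives the closed form
\begin{equation*}
    P_z(y)=\Bigl(1+\sum_{k\neq y}\exp\bigl(-m_k(z)\bigr)\Bigr)^{-1}.
\end{equation*}
So $P_z(y)$ is a nonincreasing function of each margin $m_k(z)$. Once the target inequality is established, the negative-mass inequality follows at once from $\sum_{k\neq y}P_z(k)=1-P_z(y)$.

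The key step is to compare the mixed margins with the strong margins coordinatewise. By the Margin mixing lemma, for every $k\neq y$ we have $m_k(z_{\text{mix}})=(1-\lambda)m_k(z_{\mathrm{weak}})+\lambda m_k(z_{\mathrm{strong}})$. This is a convex combination. Under the hypothesis $m_k(z_{\mathrm{weak}})\le m_k(z_{\mathrm{strong}})$ for all $k\neq y$, it therefore satisfies $m_k(z_{\text{mix}})\le m_k(z_{\mathrm{strong}})$ for all $k\neq y$. Equivalently, $m_k(z_{\mathrm{strong}})-m_k(z_{\text{mix}})=(1-\lambda)\Delta m_k\ge 0$, which is exactly the identity used in the proof of the per-token corollary, now with $\Delta m_k\ge 0$ allowed.

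Substituting into the closed form, each term obeys $\exp(-m_k(z_{\text{mix}}))\ge\exp(-m_k(z_{\mathrm{strong}}))$. The denominator for $z_{\text{mix}}$ is therefore at least the denominator for $z_{\mathrm{strong}}$, which gives $P_{\mathrm{mix}}(y)\le P_{\mathrm{strong}}(y)$. Taking complements then gives the second inequality in \eqref{eq:appendix_total_neg_mass}.

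There is no real obstacle here. The only point needing care is to use the closed form for $P_z(y)$ rather than trying to sum the per-token corollary. That corollary is conditional on the very target-probability ratio we are trying to control, so summing it would be circular, while the closed form avoids this. I would also note the equality cases: equality holds when $\lambda=1$ or when all margins coincide. I would remark that the per-token conclusion $P_{\mathrm{mix}}(k)\ge P_{\mathrm{strong}}(k)$ need not hold for every $k$; only the total negative mass is guaranteed to increase.
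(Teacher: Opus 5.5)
Your proposal is correct and follows the paper's proof essentially step for step. Margin mixing gives $m_k(z_{\text{mix}})\le m_k(z_{\mathrm{strong}})$ for every $k\neq y$, the closed form $P_z(y)=\bigl(1+\sum_{k\neq y}\exp(-m_k(z))\bigr)^{-1}$ turns this into a denominator comparison, and taking complements yields the negative-mass inequality.

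Your side remarks go beyond the paper's proof and are accurate: summing the per-token corollary would be circular, equality holds when $\lambda=1$ or all $\Delta m_k=0$, and per-token amplification can fail when some $\Delta m_k=0$.
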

\begin{proof}
By \eqref{eq:appendix_margin_mix} and the assumption,
\begin{equation}
    m_k(z_{\text{mix}})=(1-\lambda)m_k(z_{\mathrm{weak}})+\lambda m_k(z_{\mathrm{strong}})
    \le m_k(z_{\mathrm{strong}})
    \quad \forall k\neq y.
    \label{eq:appendix_margin_shrink}
\end{equation}
Applying \eqref{eq:appendix_logodds} yields
\begin{equation}
    \exp\!\big(-m_k(z_{\text{mix}})\big)
    \ge
    \exp\!\big(-m_k(z_{\mathrm{strong}})\big)
    \quad \forall k\neq y.
    \label{eq:appendix_exp_compare}
\end{equation}
Summing over $k\neq y$ gives
\begin{equation}
    \sum_{k\neq y}\exp\!\big(-m_k(z_{\text{mix}})\big)
    \ge
    \sum_{k\neq y}\exp\!\big(-m_k(z_{\mathrm{strong}})\big).
    \label{eq:appendix_sum_compare}
\end{equation}
Since
\begin{equation}
    P_z(y)=\left(1+\sum_{k\neq y}\exp\!\big(-m_k(z)\big)\right)^{-1},
    \label{eq:appendix_py_denominator}
\end{equation}
the denominator for $P_{\mathrm{mix}}(y)$ is no smaller than that for $P_{\mathrm{strong}}(y)$, and therefore $P_{\mathrm{mix}}(y)\le P_{\mathrm{strong}}(y)$. The second inequality follows from probabilities summing to one:
\begin{equation}
    \sum_{k\neq y}P_{\mathrm{mix}}(k)=1-P_{\mathrm{mix}}(y)
    \ge 1-P_{\mathrm{strong}}(y)
    =\sum_{k\neq y}P_{\mathrm{strong}}(k).
    \label{eq:appendix_neg_mass_compare}
\end{equation}
\end{proof}

\begin{proposition}[Logit updates emphasize negative suppression]
For any negative token $k\neq y$, the local logit update for branch $i\in\{\mathrm{weak},\mathrm{strong}\}$ under the mixed objective is
\begin{equation}
    \Delta z_i[k]\approx-\eta s_i P_{\mathrm{mix}}(k),\quad
    s_{\mathrm{weak}}=1-\lambda,\;s_{\mathrm{strong}}=\lambda,
    \label{eq:appendix_neg_update}
\end{equation}
while for the target token,
\begin{equation}
    \Delta z_i[y]\approx\eta s_i(1-P_{\mathrm{mix}}(y)).
    \label{eq:appendix_pos_update}
\end{equation}
Thus any increase in $P_{\mathrm{mix}}(k)$ directly amplifies suppression of hard negatives, while a decrease in $P_{\mathrm{mix}}(y)$ strengthens the upward push on the target logit.
\end{proposition}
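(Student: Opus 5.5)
The plan is to treat the statement as a first-order (one gradient step) computation in logit space. It has two ingredients: the chain rule through the mixing map, and a local approximation that turns a parameter-space step into a logit-space step.

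\textbf{Branch-level logit gradients.} I would start from the residual form \eqref{eq:appendix_grad_residual}, $g_{\text{mix}}=P_{\mathrm{mix}}(\cdot)-e_y$. Since $z_{\text{mix}}$ is affine in $(z_{\mathrm{weak}},z_{\mathrm{strong}})$ with coefficients $(1-\lambda,\lambda)$, the Jacobian $\partial z_{\text{mix}}/\partial z_i$ equals $s_i I$. This recovers \eqref{eq:appendix_grad_model}:
\[
\nabla_{z_i}\mathcal{L}_{\mathrm{mix}}=s_i\,g_{\text{mix}}.
\]

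\textbf{From parameter step to logit step.} Let branch $i$ have parameters $\theta_i$. Gradient descent with step $\eta$ moves $\theta_i$ by $-\eta J_i^\top s_i g_{\text{mix}}$, where $J_i=\partial z_i/\partial\theta_i$. To first order, the logit change is
\[
\Delta z_i\approx -\eta s_i\,(J_iJ_i^\top)\,g_{\text{mix}}.
\]
The key modelling step is the local diagonal-kernel approximation already invoked in the main-text proposition. It replaces the Gram matrix $J_iJ_i^\top$ by the identity, after absorbing a common positive scale into $\eta$. This gives $\Delta z_i\approx-\eta s_i g_{\text{mix}}$.

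\textbf{Reading off the coordinates.} The remaining step is coordinatewise:
\begin{itemize}
\item For $k\neq y$, $g_{\text{mix}}[k]=P_{\mathrm{mix}}(k)$, which yields \eqref{eq:appendix_neg_update}.
\item For $k=y$, $g_{\text{mix}}[y]=P_{\mathrm{mix}}(y)-1$, so $\Delta z_i[y]\approx\eta s_i(1-P_{\mathrm{mix}}(y))$, which is \eqref{eq:appendix_pos_update}.
\end{itemize}

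The closing sentence of the proposition then follows from monotonicity. The magnitude $|\Delta z_i[k]|$ is increasing in $P_{\mathrm{mix}}(k)$, and $\Delta z_i[y]$ is decreasing in $P_{\mathrm{mix}}(y)$. Combined with Theorem~(total negative-mass increase) and the per-token corollary, this shows that mixing strengthens both updates relative to SFT, scaled by $s_i$.

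\textbf{Main obstacle.} The hard part is justifying the kernel approximation, since everything else is exact algebra. I would make explicit that it is an assumption, not a derived fact: $J_iJ_i^\top\approx c_i I$ with $c_i>0$. Off-diagonal terms couple the vocabulary coordinates, so without this assumption the signs per token are not guaranteed.

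A weaker but rigorous fallback is available. If one only assumes $J_iJ_i^\top$ is positive semidefinite, then
\[
\langle\Delta z_i,g_{\text{mix}}\rangle\approx-\eta s_i\,g_{\text{mix}}^\top J_iJ_i^\top g_{\text{mix}}\le 0,
\]
which is aggregate descent along the residual. The per-token statement remains an approximation, and I would phrase it as such, consistent with the ``$\approx$'' in the statement.
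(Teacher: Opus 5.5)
Your proposal is correct and is essentially the paper's proof: the chain rule through the affine mixing map gives $\nabla_{z_i}\mathcal{L}_{\mathrm{mix}}=s_i g_{\text{mix}}$, and a first-order step $\Delta z_i\approx-\eta\nabla_{z_i}\mathcal{L}_{\mathrm{mix}}$ is then read off coordinatewise. The only difference is that the paper posits this logit-space step directly, whereas you reach it from the Gram-matrix form $\Delta z_i\approx-\eta s_iJ_iJ_i^\top g_{\text{mix}}$ (as in Appendix~\ref{sec:appendix_joint_theory}) by stating the diagonal-kernel assumption explicitly.

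One small caution: your remark that mixing strengthens both updates relative to SFT is not part of the statement. It holds only for the residual $P_{\mathrm{mix}}(\cdot)-e_y$, and only under the Theorem's and Corollary's hypotheses; once the factor $s_i\le 1$ is kept, the branch update need not exceed the SFT update $-\eta P_{\mathrm{strong}}(k)$ in magnitude.
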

\begin{proof}
From \eqref{eq:appendix_grad_model}, the branch-$i$ logit gradient satisfies
\begin{equation}
    \nabla_{z_i}\mathcal{L}_{\mathrm{mix}}=s_i g_{\text{mix}},
    \label{eq:appendix_prop_grad}
\end{equation}
where $g_{\text{mix}}=P_{\mathrm{mix}}(\cdot)-e_y$.
For a negative token $k\neq y$,
\begin{equation}
    \frac{\partial \mathcal{L}_{\mathrm{mix}}}{\partial z_i[k]}=s_iP_{\mathrm{mix}}(k),
    \label{eq:appendix_prop_neg_grad}
\end{equation}
and for the target token,
\begin{equation}
    \frac{\partial \mathcal{L}_{\mathrm{mix}}}{\partial z_i[y]}=s_i(P_{\mathrm{mix}}(y)-1).
    \label{eq:appendix_prop_pos_grad}
\end{equation}
Under a first-order update $\Delta z_i\approx-\eta\nabla_{z_i}\mathcal{L}_{\mathrm{mix}}$, we obtain \eqref{eq:appendix_neg_update} and \eqref{eq:appendix_pos_update}.
\end{proof}

\begin{remark}[Consistency with logit statistics]
The mechanism above provides a local explanation for the logit statistics reported in Table~\ref{tab:logit_analysis} and Table~\ref{tab:logit_analysis_extended}: increased negative probability mass amplifies downward updates on incorrect tokens, while a lower mixed target probability strengthens the upward update on the correct token. Mean shifts can further accumulate along shift-invariant directions, as analyzed in Appendix~\ref{sec:appendix_joint_theory}.
\end{remark}

\newpage
\section{Additional Dynamics Diagnostics for Mixed-Logit Training}
\label{sec:appendix_joint_theory}

\subsection{Problem Setting and Notation}
\paragraph{Notation.}
Fix a context $x$ and a target index $y$, and analyze a single WD-JT mixed-logit update; we omit the conditioning on $x$ for readability. The weak and strong agents form a training-time pair and produce logits $z_{\mathrm{weak}}(x), z_{\mathrm{strong}}(x) \in \mathbb{R}^{|\mathcal{V}|}$. We write branch-level derivatives with respect to both logit maps because the mixed objective backpropagates through the paired logits, while the strong branch is the model retained for evaluation and deployment. The mixed logits are
\begin{equation}
    z_{\text{mix}} = (1-\lambda) z_{\mathrm{weak}} + \lambda z_{\mathrm{strong}}, \quad \lambda \in [0,1].
    \label{eq:appendix_mix}
\end{equation}
For any logit map $z(x)$, define $P_z(\cdot\mid x)=\mathrm{Softmax}(z(x))$ and use the shorthand $P_z(k)$; let $P_{\mathrm{mix}} = P_{z_{\text{mix}}}$ and $e_y$ denote the one-hot target. We use a subscript $i \in \{\mathrm{weak}, \mathrm{strong}\}$ to index the two models in generic expressions. Let $\mathbf{1}$ be the all-ones vector in $\mathbb{R}^{|\mathcal{V}|}$.
We use the mixed-logit loss
\begin{equation}
    \mathcal{L}_{\mathrm{mix}} = -\log P_{\mathrm{mix}}(y),
    \label{eq:appendix_ce}
\end{equation}
and the residual
\begin{equation}
    g = \nabla_{z_{\text{mix}}}\mathcal{L}_{\mathrm{mix}} = P_{\mathrm{mix}}(\cdot) - e_y.
    \label{eq:appendix_residual}
\end{equation}
For any negative token $k \neq y$, the margin is $m_k(z) = z[y] - z[k]$.
We use $\eta$ to denote the learning rate.
For each model $i \in \{\mathrm{weak}, \mathrm{strong}\}$, let $J_i = \partial z_i / \partial \theta_i$ and define the Jacobian Gram matrix
\begin{equation}
    K_i = J_i J_i^\top.
    \label{eq:appendix_gram}
\end{equation}

\begin{definition}[Centered logits and centered norm]
Let $\bar{z} = \frac{1}{|\mathcal{V}|}\mathbf{1}^\top z$ be the logit mean. The centered logits and centered norm are
\begin{equation}
    \tilde{z} = z - \bar{z}\mathbf{1}, \quad \|\tilde{z}\|_2 = \sqrt{\sum_{k=1}^{|\mathcal{V}|} (z[k] - \bar{z})^2}.
    \label{eq:appendix_centered_norm}
\end{equation}
Since $\|\tilde{z}\|_2 = \sqrt{|\mathcal{V}|} \cdot \text{Std}(z)$, the centered norm is a shift-invariant measure of sharpness.
\end{definition}

\subsection{First-Order Gradients and Centered Linearized Dynamics}
By the chain rule,
\begin{align}
    \nabla_{z_{\mathrm{weak}}} \mathcal{L}_{\mathrm{mix}} &= (1-\lambda) g, \label{eq:appendix_grad_z1} \\
    \nabla_{z_{\mathrm{strong}}} \mathcal{L}_{\mathrm{mix}} &= \lambda g. \label{eq:appendix_grad_z2}
\end{align}
Thus the mixed objective exposes branch-level logit derivatives in the same direction but with different magnitudes. The following branch-sensitivity calculation is a local diagnostic of how the jointly optimized weak--strong pair responds under the mixed objective. Under a first-order (local) linearization, we apply a Taylor expansion of logits around $\theta_i$:
\begin{equation}
    z_i(\theta_i + \Delta \theta_i) \approx z_i(\theta_i) + J_i \Delta \theta_i, \quad i \in \{\mathrm{weak}, \mathrm{strong}\}.
    \label{eq:appendix_taylor_logits}
\end{equation}
With $\Delta \theta_i = -\eta \nabla_{\theta_i}\mathcal{L}_{\mathrm{mix}}$, this yields
\begin{equation}
    \Delta z_i \approx -\eta J_i J_i^\top g = -\eta K_i g,
    \label{eq:appendix_update_z_i}
\end{equation}
and incorporating the mixing weights in \eqref{eq:appendix_grad_z1}--\eqref{eq:appendix_grad_z2} yields
\begin{equation}
    \Delta z_i \approx -\eta s_i K_i g, \quad s_{\mathrm{weak}} = 1-\lambda,\; s_{\mathrm{strong}} = \lambda.
    \label{eq:appendix_update_weighted}
\end{equation}

To remove mean-shift effects, let $\Pi = I - \frac{1}{|\mathcal{V}|}\mathbf{1}\mathbf{1}^\top$ be the centering projector. Since $\mathbf{1}^\top g = 0$, we have $\Pi g = g$ and define the centered kernel
\begin{equation}
    \tilde{K}_i = \Pi K_i \Pi.
    \label{eq:appendix_centered_kernel}
\end{equation}
Then the centered logit dynamics are
\begin{equation}
    \Delta \tilde{z}_i \approx -\eta s_i \tilde{K}_i g.
    \label{eq:appendix_update_centered}
\end{equation}

\begin{lemma}[PSD of centered kernel]
For each model $i \in \{\mathrm{weak}, \mathrm{strong}\}$, $\tilde{K}_i$ is positive semidefinite.
\end{lemma}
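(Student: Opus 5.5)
The plan is to show directly that the quadratic form of $\tilde{K}_i$ is a squared norm. First, $K_i = J_i J_i^\top$ is positive semidefinite. For any $v\in\mathbb{R}^{|\mathcal{V}|}$ we have $v^\top K_i v = v^\top J_i J_i^\top v = \|J_i^\top v\|_2^2 \ge 0$. Symmetry is immediate, since $K_i^\top = (J_i J_i^\top)^\top = K_i$.

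Next we transfer this to the centered kernel via a congruence by $\Pi$. The centering projector $\Pi = I - \frac{1}{|\mathcal{V}|}\mathbf{1}\mathbf{1}^\top$ is symmetric, so $\Pi^\top = \Pi$. Hence $\tilde{K}_i = \Pi K_i \Pi = \Pi^\top K_i \Pi$ is symmetric. For any $v$, set $w = \Pi v$. Then
\begin{equation*}
v^\top \tilde{K}_i v = (\Pi v)^\top K_i (\Pi v) = w^\top K_i w = \|J_i^\top \Pi v\|_2^2 \ge 0 .
\end{equation*}
Equivalently, $\tilde{K}_i = (\Pi J_i)(\Pi J_i)^\top$ is itself a Gram matrix. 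Either form gives positive semidefiniteness.

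There is no real obstacle here. The only point needing care is that $\Pi$ is symmetric, which follows because $\mathbf{1}\mathbf{1}^\top$ is symmetric; idempotence of $\Pi$ is not needed. We take $J_i$ to be the real Jacobian of the logits at the current parameters $\theta_i$, so all quantities are real and the squared-norm argument applies without modification.
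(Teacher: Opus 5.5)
Your proof is correct and matches the paper's argument: both show $v^\top \tilde{K}_i v = \|J_i^\top \Pi v\|_2^2 \ge 0$. Your version also states explicitly the symmetry of $\Pi$, which the paper's one-line proof uses without comment when it rewrites $v^\top \Pi$ as $(\Pi v)^\top$.
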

\begin{proof}
For any $x \in \mathbb{R}^{|\mathcal{V}|}$, $x^\top \tilde{K}_i x = x^\top \Pi J_i J_i^\top \Pi x = \|J_i^\top \Pi x\|_2^2 \ge 0$.
\end{proof}

For the jointly optimized training pair, the first-order fused-logit change is
\begin{align}
    \Delta \tilde{z}_{\text{mix}}
    &= (1-\lambda)\Delta \tilde{z}_{\mathrm{weak}} + \lambda \Delta \tilde{z}_{\mathrm{strong}} \nonumber\\
    &= -\eta \left[(1-\lambda)^2 \tilde{K}_{\mathrm{weak}} + \lambda^2 \tilde{K}_{\mathrm{strong}}\right] g.
    \label{eq:appendix_delta_mix}
\end{align}
The corresponding diagnostic first-order loss decrease follows from a Taylor expansion of $\mathcal{L}_{\mathrm{mix}}$ at $z_{\text{mix}}$:
\begin{equation}
    \Delta \mathcal{L}_{\mathrm{mix}} \approx \nabla_{z_{\text{mix}}}\mathcal{L}_{\mathrm{mix}}^\top \Delta \tilde{z}_{\text{mix}}
    = g^\top \Delta \tilde{z}_{\text{mix}}.
    \label{eq:appendix_taylor_loss}
\end{equation}
Substituting \eqref{eq:appendix_delta_mix} gives
\begin{equation}
    \Delta \mathcal{L}_{\mathrm{mix}} \approx g^\top \Delta \tilde{z}_{\text{mix}}
    = -\eta \left[(1-\lambda)^2 g^\top \tilde{K}_{\mathrm{weak}} g + \lambda^2 g^\top \tilde{K}_{\mathrm{strong}} g\right].
    \label{eq:appendix_delta_loss}
\end{equation}

\subsection{Stage I: Hard-Negative Amplification and Strong-Model Dominance}
Early in mixed-logit training, the weak model is more confused, so for many hard negatives $k$ we have $m_k(z_{\mathrm{weak}}) < m_k(z_{\mathrm{strong}})$. By the logit-mixing analysis in Section~\ref{sec:theoretical_analysis} of the main paper (Theorem~1), mixing shrinks these margins and increases total negative probability mass. Consequently, the residual $g$ grows in magnitude and is biased toward hard negatives: the weak model acts as a \emph{gradient amplifier}, not a competitor in final accuracy.

In this diagnostic decomposition, the effective per-step loss decrease attributable to model $i \in \{\mathrm{weak}, \mathrm{strong}\}$ is
\begin{equation}
    E_i \triangleq \eta s_i^2 g^\top \tilde{K}_i g, \quad s_{\mathrm{weak}} = 1-\lambda,\; s_{\mathrm{strong}} = \lambda.
\end{equation}
Dominance corresponds to $E_{\mathrm{strong}} > E_{\mathrm{weak}}$. Define the directional sensitivity $\kappa_i = g^\top \tilde{K}_i g / \|g\|_2^2$.

\begin{assumption}[Sensitivity advantage of the strong model]
Because the strong model is sharper (larger centered norm / lower entropy), its centered kernel responds more strongly along the residual direction:
\begin{equation}
    g^\top \tilde{K}_{\mathrm{strong}} g \approx \alpha \, g^\top \tilde{K}_{\mathrm{weak}} g, \quad \alpha > 1.
    \label{eq:appendix_alpha}
\end{equation}
\end{assumption}

Substituting \eqref{eq:appendix_alpha} into \eqref{eq:appendix_delta_loss} yields the total effective rate
\begin{equation}
    S(\lambda) = (1-\lambda)^2 + \alpha \lambda^2,
    \label{eq:appendix_total_rate}
\end{equation}
and the strong model dominates the effective mixed-logit update when
\begin{equation}
    \lambda^2 \alpha > (1-\lambda)^2.
    \label{eq:appendix_dominance}
\end{equation}
Solving \eqref{eq:appendix_dominance} gives the gradient-share crossover
\begin{equation}
    \lambda_{\text{cross}} = \frac{1}{1+\sqrt{\alpha}}.
    \label{eq:appendix_lambda_cross}
\end{equation}

\begin{remark}[Crossover vs.\ accuracy]
Equation~\eqref{eq:appendix_lambda_cross} characterizes a \emph{local} crossover of branch sensitivity under a one-step linearization. It does \emph{not} predict an accuracy inversion between the two agents. In practice, $g$, $\tilde{K}_{\mathrm{weak}}$, and $\tilde{K}_{\mathrm{strong}}$ evolve with $\lambda$ and training time, while optimizer dynamics smooth the trajectory, yielding a broad optimum region rather than a sharp transition.
\end{remark}

\begin{remark}[Softmax amplification]
Because $P_{\mathrm{mix}}(k) \propto \exp(z_{\text{mix}}[k])$, modest logit differences can disproportionately tilt $P_{\mathrm{mix}}$ and $g$. This amplifies hard-negative gradients and reinforces the early dominance in \eqref{eq:appendix_dominance}.
\end{remark}

\subsection{Stage II: Gradient Shielding via Hessian Contraction}
We next analyze why the weak-induced interaction becomes increasingly shielded once the strong branch becomes confident.

\subsubsection{Softmax Jacobian and Loss Hessian}
Recall $P_{\mathrm{mix}} = \text{Softmax}(z_{\text{mix}})$. The Jacobian of Softmax is
\begin{equation}
    \frac{\partial P_{\mathrm{mix}}}{\partial z_{\text{mix}}} = \text{diag}(P_{\mathrm{mix}}) - P_{\mathrm{mix}} P_{\mathrm{mix}}^\top.
    \label{eq:appendix_softmax_jacobian}
\end{equation}
Since $e_y$ is constant, the Hessian of the cross-entropy loss with respect to $z_{\text{mix}}$ is
\begin{equation}
    \mathbf{H}_{\mathcal{L}} = \nabla^2_{z_{\text{mix}}}\mathcal{L}_{\mathrm{mix}}
    = \frac{\partial (P_{\mathrm{mix}}-e_y)}{\partial z_{\text{mix}}}
    = \text{diag}(P_{\mathrm{mix}}) - P_{\mathrm{mix}} P_{\mathrm{mix}}^\top.
    \label{eq:appendix_loss_hessian}
\end{equation}

\begin{lemma}[PSD of the loss Hessian]
$\mathbf{H}_{\mathcal{L}}$ is positive semidefinite.
\end{lemma}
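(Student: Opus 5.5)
To show $\mathbf{H}_{\mathcal{L}}=\mathrm{diag}(P_{\mathrm{mix}})-P_{\mathrm{mix}}P_{\mathrm{mix}}^\top$ is positive semidefinite, the plan is to evaluate its quadratic form directly and recognize it as a variance. Write $p=P_{\mathrm{mix}}$. By the definition of the softmax, $p_k>0$ and $\sum_k p_k=1$. For an arbitrary $v\in\mathbb{R}^{|\mathcal{V}|}$ we have
\begin{equation*}
v^\top \mathbf{H}_{\mathcal{L}} v=\sum_k p_k v_k^2-\Big(\sum_k p_k v_k\Big)^2 .
\end{equation*}
This is exactly the variance of the random variable that takes value $v_k$ with probability $p_k$. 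Equivalently, it equals $\sum_k p_k\,(v_k-\bar v)^2\ge 0$, where $\bar v=\sum_k p_k v_k$.

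The key steps, in order, are as follows.

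\begin{enumerate}
\item Note that $\mathbf{H}_{\mathcal{L}}$ is symmetric, so positive semidefiniteness reduces to nonnegativity of the quadratic form.
\item Expand the quadratic form as displayed above.
\item Verify the identity $\sum_k p_k(v_k-\bar v)^2=\sum_k p_k v_k^2-2\bar v\sum_k p_kv_k+\bar v^2\sum_k p_k$. This collapses to the expression above using only $\sum_k p_k=1$.
\end{enumerate}

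As an alternative check, Cauchy--Schwarz applied to the vectors $(\sqrt{p_k})$ and $(\sqrt{p_k}v_k)$ gives $\big(\sum_k p_kv_k\big)^2\le\big(\sum_k p_k\big)\big(\sum_k p_kv_k^2\big)=\sum_k p_kv_k^2$. This yields the same conclusion.

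No step is a real obstacle. The only point requiring care is that the argument uses normalization, $\sum_k p_k=1$, and nonnegativity of $p$, both of which hold because $P_{\mathrm{mix}}$ is a softmax output. I would also remark that the form vanishes exactly when $v$ is constant on the support of $p$. Since the softmax has full support, this means exactly when $v\propto\mathbf{1}$. This recovers the shift-invariant null direction used in the centered-kernel analysis, in which the centering projector $\Pi$ annihilates $\mathbf{1}$.
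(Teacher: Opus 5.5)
Your proof is correct and follows essentially the same route as the paper, which also writes $v^\top \mathbf{H}_{\mathcal{L}} v=\sum_j P_{\mathrm{mix}}[j]v_j^2-\bigl(\sum_j P_{\mathrm{mix}}[j]v_j\bigr)^2=\mathrm{Var}_{P_{\mathrm{mix}}}(v)\ge 0$. Your additional remarks (symmetry, the explicit variance identity, the Cauchy--Schwarz check, and the null direction $\mathbf{1}$ from full softmax support) are accurate.
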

\begin{proof}
For any $v \in \mathbb{R}^{|\mathcal{V}|}$,
\begin{equation}
    v^\top \mathbf{H}_{\mathcal{L}} v
    = \sum_{j} P_{\mathrm{mix}}[j]\, v_j^2 - \left(\sum_j P_{\mathrm{mix}}[j]\, v_j\right)^2
    = \mathrm{Var}_{P_{\mathrm{mix}}}(v) \ge 0.
    \label{eq:appendix_hessian_psd}
\end{equation}
\end{proof}

\subsubsection{Interaction Hessian Between Models}
The mixed logits depend on both branches, so the cross-Hessian is useful as a sensitivity diagnostic: it captures how strong-logit changes contract the weak-side derivative, although the weak agent itself is not updated:
\begin{align}
    \mathbf{H}_{\mathrm{ws}}
    &= \frac{\partial}{\partial z_{\mathrm{strong}}}\left[\nabla_{z_{\mathrm{weak}}}\mathcal{L}_{\mathrm{mix}}\right] \nonumber\\
    &= (1-\lambda)\frac{\partial (P_{\mathrm{mix}}-e_y)}{\partial z_{\text{mix}}}\frac{\partial z_{\text{mix}}}{\partial z_{\mathrm{strong}}} \nonumber\\
    &= \lambda(1-\lambda)\mathbf{H}_{\mathcal{L}}.
    \label{eq:appendix_cross_hessian}
\end{align}

\subsubsection{Shielding in the Confident Regime}
Assume the target index is $y$ and the strong model drives the mixed prediction to $P_{\mathrm{mix}}(y) \to 1$. Then $P_{\mathrm{mix}}(k) \to 0$ for all $k \neq y$, which implies
\begin{equation}
    \lim_{P_{\mathrm{mix}} \to e_y}\mathbf{H}_{\mathcal{L}} = \mathbf{0}.
    \label{eq:appendix_hessian_collapse}
\end{equation}
Consequently,
\begin{equation}
    \lim_{P_{\mathrm{mix}} \to e_y} \mathbf{H}_{\mathrm{ws}} = \mathbf{0},
    \label{eq:appendix_cross_hessian_collapse}
\end{equation}
and the cross-branch curvature vanishes. This is the mathematical form of \emph{gradient shielding}: once the strong branch dominates and the fused prediction becomes highly concentrated, both the first-order residual $g$ and its local sensitivity collapse, making weak-induced corrections fade in the mixed objective.

\subsection{Stage III: Null-Space Drift and Mean Shift}
We now explain why global logit shifts can appear in post-WD-JT diagnostics without necessarily changing centered sharpness.

\subsubsection{Shift Invariance of the Mixed Softmax}
Softmax is invariant to global shifts. For any scalar $c$,
\begin{equation}
    \text{Softmax}(z + c\mathbf{1}) = \text{Softmax}(z).
    \label{eq:appendix_softmax_shift}
\end{equation}
Applying this to the fused logits, let $z_{\mathrm{weak}}' = z_{\mathrm{weak}} + c\mathbf{1}$. Then
\begin{align}
    z_{\text{mix}}'
    &= (1-\lambda) z_{\mathrm{weak}}' + \lambda z_{\mathrm{strong}} \nonumber\\
    &= z_{\text{mix}} + (1-\lambda) c \mathbf{1},
    \label{eq:appendix_shift_mix}
\end{align}
and by \eqref{eq:appendix_softmax_shift} the predictive distribution is unchanged. Hence, the loss is flat along the mean-shift direction of each model.

\subsubsection{Zero-Eigenvalue Direction of the Hessian}
Let $\mathbf{1}$ denote the all-ones vector. From \eqref{eq:appendix_loss_hessian},
\begin{equation}
    \mathbf{H}_{\mathcal{L}}\mathbf{1}
    = \text{diag}(P_{\mathrm{mix}})\mathbf{1} - P_{\mathrm{mix}}(P_{\mathrm{mix}}^\top \mathbf{1})
    = P_{\mathrm{mix}} - P_{\mathrm{mix}} = \mathbf{0}.
    \label{eq:appendix_null_direction}
\end{equation}
Thus $\mathbf{1}$ is a zero-eigenvalue direction of the Hessian, confirming that the loss has no curvature along global shifts.

\subsubsection{Why Drift Accumulates in the Null Space}
For cross-entropy, the gradient along a global-shift direction is exactly zero because $\mathbf{1}^{\top}g=0$. In any trainable branch, stochastic optimizer noise and optimizer regularization can still move parameters in directions that change raw logit means; a simplified stochastic component can be written as
\begin{equation}
    \theta^{+} = \theta - \eta \epsilon, \quad \epsilon \sim \mathcal{N}(0,\Sigma),
    \label{eq:appendix_sgd_stochasticity}
\end{equation}
which is a random walk. The logit space decomposes as
\begin{equation}
    z = \bar{z} \mathbf{1} + \tilde{z}, \quad
    \tilde{z}^\top \mathbf{1} = 0,
    \label{eq:appendix_decompose}
\end{equation}
where the null space is $\mathrm{span}\{\mathbf{1}\}$ and the active space is its orthogonal complement. In the active space, even small gradients can weakly restore the distribution shape. In the null space, there is no CE restoring force; stochasticity can increase the variance of $\bar{z}$, while optimizer details such as weight decay can determine the observed direction of the mean shift. This explains why raw logit means can drift without a corresponding increase in centered norm.

\subsection{Discussion: regime of validity, multi-step behaviour, and failure modes}
\label{app:theory_discussion}

\paragraph{Regime of validity.} The uniform margin shrinkage condition $m_k(z_{\mathrm{weak}}) \le m_k(z_{\mathrm{strong}})$ for \emph{all} $k \neq y$ (Theorem~1 of the main paper) is sufficient but not necessary. In practice, on a small fraction of tokens the strong agent may be \emph{less} confident than the weak agent (e.g., a token where the strong model has not yet committed). For such tokens, mixing shrinks negative mass instead of expanding it, and the global benefit of \modelname{} relies on the much larger pool of tokens where the assumption holds. Since $z_{\mathrm{mix}}$ is a convex combination, the effect on a violating token is bounded by the gap between $P_{\mathrm{strong}}$ and $P_{\mathrm{weak}}$ on that token, but the intended amplification can be reduced or reversed locally.

\paragraph{From single-step to multi-step training.}
The analysis in Section~\ref{sec:theoretical_analysis} of the main paper is a single-step linearisation: it characterises the immediate effect of one gradient update under fused logits. Across many steps, the cumulative behaviour is consistent with what we observe empirically. Stages~I and~II describe the typical regime in early to mid training, where amplification dominates and gradients on hard negatives keep the strong agent moving. Stage~III corresponds to late-training over-optimisation: as $P_{\mathrm{mix}}$ becomes nearly one-hot, gradients shrink on every token and optimizer dynamics can accumulate in the shift-invariant null space, producing raw mean-logit drift without comparable centered-norm growth. The convergence trajectories in Figure~\ref{fig:convergence} are consistent with this picture, but are used as diagnostics rather than as benchmark-specific early-stopping evidence.

\paragraph{When does suppression hurt?}
Theorem~1 and Stage~I together predict stronger suppression of non-target probability mass, consistent with the empirical $z_{\mathrm{bg}}\!:\, 2.09 \to 0.90$ in Section~\ref{sec:logit_statistics}. On tasks where the correct answer is \emph{single-mode}---competition math, code, structured logic---this suppression is desirable. On tasks where multiple valid completions coexist (open-ended generation, dialogue with multiple acceptable continuations, tasks requiring calibrated hedging), aggressive suppression of hard negatives may be over-confident. Our experiments cover only single-mode benchmarks; we leave hedging-sensitive evaluation to future work and flag this as a limitation in Appendix~\ref{app:limitations}.




\section{Data Construction and Training Details}
\label{app:dataconstruction}

\subsection{Evaluation Benchmarks and Baselines}
\label{app:eval_baselines}

\paragraph{Tasks and evaluation.} For mathematical reasoning, we report performance on AIME2025, MATH500, AMC23, AQuA~\cite{ling2017program}, GSM8K~\cite{cobbe2021training}, MAWPS~\cite{koncel2016mawps}, and SVAMP~\cite{patel2021nlp}. For code generation, we evaluate on HumanEval+ (HE+) and MBPP+, the augmented test-case versions of HumanEval~\cite{chen2021evaluatinglargelanguagemodels} and MBPP~\cite{austin2021program} introduced by EvalPlus~\cite{liu2023evalplus}, together with BigCodeBench (BCB)~\cite{zhuo2024bigcodebench} and LiveCodeBench (LCB)~\cite{jain2024livecodebench} for broader, contamination-resistant coverage. For logical reasoning, we evaluate on LogiQA~2.0~\cite{liu2020logiqa} and ReClor~\cite{yu2020reclor}.

\paragraph{Baselines.} \textbf{Standard SFT} optimizes the conventional next-token prediction objective without discrepancy-based sample selection or logit-space perturbation. \textbf{UNDIAL}~\citep{dong2024undialselfdistillationadjustedlogits} suppresses target-token logits using stochastic penalties to mitigate over-confident predictions. \textbf{NEFTune}~\citep{jain2023neftunenoisyembeddingsimprove} injects random noise into embeddings during training to improve generalization. \textbf{SPIN}~\citep{chen2024self} employs a self-play mechanism where the model iteratively distinguishes its own generated responses from human-annotated data, progressively converting a weak policy into a stronger one without external reward models. \textbf{SSB}~\citep{mitra2025semanticsoft} (Semantic Soft Bootstrapping) is an RL-free self-distillation method that uses the same base model as both teacher and student under different contexts: the teacher, conditioned on a hint trace, generates soft logit targets over the answer, which the student (without hints) learns to match via KL divergence, eliminating the need for preference data or RL training.

\subsection{Training Data Sources}
\label{app:data_sources}

Our training mixture covers three domains: mathematical reasoning, code generation, and logical reasoning. All samples use a chat message format (user/assistant turns), with assistant completions retaining the full \texttt{<think>}-wrapped chain of thought and a structured final answer. We use the full set of verified samples in each domain (Table~\ref{tab:train_data}); the math pool is by far the largest ($\sim$5.7$\times$ code, $\sim$7.5$\times$ logic).

\begin{table}[h]
\centering
\small
\caption{Training data across the three domains used in our experiments. Each domain uses the full set of verified samples from the corresponding source.}
\label{tab:train_data}
\begin{tabular}{l r p{8.0cm}}
\toprule
Domain & \#Samples & Source \\
\midrule
Math  & 111{,}657 & Full AM-DeepSeek-R1-1.4M~\cite{zhao202514millionopensourcedistilled} math subset, verified by \texttt{math\_verify} \\
Code  & 19{,}457  & AM-DeepSeek-R1-1.4M code subset, execution-verified against the provided unit tests \\
Logic & 14{,}882  & LogiQA~2.0 and ReClor distilled CoT, retained after answer-match verification (Section~\ref{app:logic_synth}) \\
\bottomrule
\end{tabular}
\end{table}

\paragraph{Mathematical reasoning.} Sourced from the AM-DeepSeek-R1-1.4M math subset ($\sim$111\,K verified examples spanning algebra, geometry, number theory, combinatorics, and competition problems). We retain DeepSeek-R1's full \texttt{<think>} CoT together with the final \texttt{\textbackslash boxed\{X\}} answer and discard samples that fail \texttt{math\_verify} parsing, leaving $111{,}657$ training examples. SSB and SPIN are exceptions and run on a $20\,$K subsample of this pool due to the high cost of their rollout phases.

\paragraph{Code generation.} Sourced from the same corpus, filtered by executing the produced code against the provided unit tests. We retain the \texttt{<think>} reasoning block; this yields $19{,}457$ training examples after execution verification.

\subsection{Logic Data Synthesis}
\label{app:logic_synth}

The Logic training set is synthesized in two steps from public benchmarks.

\paragraph{Step 1: merging raw questions.} We combine the raw splits of LogiQA~2.0~\cite{liu2020logiqa} (MRC version, \texttt{train}/\texttt{dev}/\texttt{test}) and ReClor~\cite{yu2020reclor} (\texttt{train}/\texttt{val}) into a unified JSON format ($\sim$17{,}205 items). Before any CoT distillation, the LogiQA~2.0 \texttt{test} split (1{,}572) and the ReClor \texttt{val} split (500) are removed from the training-candidate pool and held out for evaluation; we use ReClor \texttt{val} because the public ReClor \texttt{test} set has no released labels. The remaining training-candidate pool contains 15{,}133 items.

\paragraph{Step 2: CoT distillation with answer verification.} We prompt Qwen3-32B in its \texttt{thinking} mode on each training-candidate question to obtain a long-form chain-of-thought answer. A sample is retained only if the extracted prediction matches the ground-truth option. This yields 14{,}882 verified Logic training samples, corresponding to a $98.3\%$ retention rate after the held-out evaluation splits have already been removed.

\subsection{Gemma-3-4B-PT: Tag-Free Preprocessing}
\label{app:gemma_prep}

In preliminary experiments, Gemma-3-4B-PT did not reliably follow the \texttt{<think>}/\texttt{<answer>} tag convention during training. We therefore adopt a tag-free variant \emph{only for Gemma}.

\paragraph{Tag stripping.} Starting from the original CoT outputs, we remove the literal \texttt{<think>}, \texttt{</think>}, \texttt{<answer>}, and \texttt{</answer>} markers while retaining the reasoning text and the inner answer payload. Thus Gemma still trains on tag-free step-by-step solutions ending with \texttt{\textbackslash boxed\{X\}} for math, \texttt{The answer is (X)} for logic, or Python code blocks for code.

\paragraph{Fallback chat template.} Gemma-3-4B-PT ships without a chat template (only the IT variant has one). We inject a Gemma-3-IT-style fallback in which the \texttt{assistant} role is mapped to \texttt{model}, a leading system message is prepended to the first user turn (rather than forming its own turn), and each turn is wrapped with \texttt{<start\_of\_turn>}/\texttt{<end\_of\_turn>}. These special tokens are already present in the PT vocabulary.

\paragraph{Tagged vs.\ tag-free system prompts.} Qwen-series bases use a tag-style system prompt that instructs the model to emit \texttt{<think>$\cdots$</think>} followed by \texttt{<answer>$\cdots$</answer>}. Gemma uses a tag-free prompt that ends with ``Solve the problem step by step. End your solution with the final answer enclosed in \texttt{\textbackslash boxed\{\}}'' (math), ``End your response with `The answer is (X).'\,'' (logic), or ``Provide clean, correct Python code'' (code). The SPIN and SSB rollout/teacher phases are aligned to the base-appropriate prompt via their \texttt{--system-prompt} / \texttt{--student-system-prompt} flags so that the self-generated trajectories match the same formatting that the student is trained to emit.

\paragraph{vLLM multimodal wrap for evaluation.} Training saves a \texttt{Gemma3ForCausalLM} (text-only) checkpoint, while vLLM~v0.11 loads Gemma-3 only as \texttt{Gemma3ForConditionalGeneration} (multimodal). To make the saved checkpoint loadable for evaluation, we (i) re-key every weight with a \texttt{language\_model.}\ prefix and (ii) inject the base model's \texttt{vision\_tower.*} and \texttt{multi\_modal\_projector.*} weights unchanged. The wrapped checkpoint is functionally identical for text-only decoding but conforms to vLLM's expected module layout.

\subsection{Training Configuration}
\label{app:training_config}

\paragraph{Implementation details.}
We use Qwen3-4B-Base, Qwen3-8B-Base~\cite{yang2025qwen3technicalreport}, Qwen2.5-3B, and Gemma-3-4B-PT~\citep{gemmateam2025gemma3technicalreport} as backbone models, and fine-tune them on the AM-1.4M dataset~\cite{zhao202514millionopensourcedistilled} for math and code, together with the Logic training set synthesised in Appendix~\ref{app:logic_synth}. All experiments are conducted on eight NVIDIA H800 80GB GPUs. Our implementation is based on the \texttt{TRL} library~\citep{vonwerra2022trl} with the Qwen3 / Gemma architectures from \texttt{transformers}~\citep{wolf2020transformers}. We use full-parameter fine-tuning for SFT and \modelname{} unless a baseline explicitly states LoRA, AdamW optimization, a global learning rate of $1 \times 10^{-5}$, maximum sequence length $4096$, global batch size $512$, gradient clipping at $1.0$, and a $10\%$ warm-up ratio. For \modelname{}, we set $\alpha = 0.1$, $\beta = 0.8$, $\gamma = 0.1$, and use $\lambda = 0.5$ for the main runs unless otherwise stated; Appendix~\ref{app:hparam_protocol} reports a finer sensitivity sweep around this region.

\paragraph{Per-domain training budget.}
All methods reported in Table~\ref{tab:main_results} use the same base models and per-domain training budget unless an exception is stated below for rollout-heavy baselines. The total training budget is fixed to \textbf{2 epochs} for the main SFT-style comparisons, including the SFT baseline and \modelname{}.

\paragraph{Training schedule and early stopping.}
The SFT baseline is trained for 2 epochs. \modelname{} uses the same 2-epoch total budget: 1 SFT warm-up epoch producing $\mathcal{M}_1$, followed by 1 weak-driven epoch using WD-DS and WD-JT. This fixed budget is used consistently for the reported main comparisons rather than selected separately per benchmark. The convergence trajectories in Figure~\ref{fig:convergence} are reported as extended diagnostics beyond the main comparison setting: pushing further can introduce over-optimisation on small benchmarks (e.g.,\ AMC23) without improving the math average, consistent with the Stage~III drift in our theoretical analysis.

Exceptions specific to individual baselines are noted in the corresponding paragraph below.

\subsection{Baseline Hyperparameters}
\label{app:baseline_hparams}

We report the key hyperparameters we used for each baseline; all runs share the same base models and training data as \modelname, and follow the per-domain epoch budget described in Appendix~\ref{app:training_config} unless stated otherwise.

\paragraph{UNDIAL.} We follow the original recipe and set the logit-smoothing scale $\sigma=1.0$. Following common practice but not stated in the original paper, we normalize the injected noise by $1/\sqrt{L}$ (where $L$ is the sequence length) to avoid amplification on long sequences, so that \texttt{noise} $= \sigma\cdot\varepsilon/\sqrt{L}$ with $\varepsilon\sim\mathcal{N}(0,I)$.

\paragraph{NEFTune.} We adopt the paper-recommended embedding-noise magnitude $\alpha=5$.

\paragraph{SPIN.} SPIN is DPO-based self-play. Per the 2-epoch allocation in Appendix~\ref{app:training_config}, we initialise from the SFT-epoch-1 checkpoint and then run 1~epoch of DPO on SPIN-generated preference pairs. Hyperparameters were tuned on a small grid and we ended up using: $\beta = 0.1$ (recommended by SPIN/TRL's DPO and empirically preferred for iterative self-play, as low $\beta$ allows the student to deviate further from the reference), learning rate $5\times10^{-7}$ (DPO is highly learning-rate sensitive---$1\times10^{-6}$ was unstable in our runs), sampling temperature $1.0$ (following SPIN~\S4 for rejected-sample diversity), gradient-norm clip $0.5$ (tighter than the TRL default of $1.0$, which we found noticeably improves DPO stability) and a $10\%$ warm-up ratio to prevent early learning-rate shocks to the reference model. Following the original SPIN protocol, each iteration trains for exactly 1~epoch before resampling.

\paragraph{SSB.} SSB is a self-bootstrapping distillation baseline. For every prompt we first draw $4$ rollouts from the current student $m_2$ (sampling temperature $0.8$, top-$k$ $50$) and retain one correct and one incorrect trajectory. A teacher (the same $m_2$ conditioned on a hint-augmented prompt, temperature $0.4$, \texttt{max\_model\_len}$=16\,384$) then produces a corrective CoT. During optimisation we store the top-$20$ logits of each teacher token, which covers $>$95\% of the softmax mass, and train the student with a LoRA adapter ($r=32$, $\alpha=32$) using an equal mixture of KD ($T=1.5$) and cross-entropy loss. Training uses $\mathrm{lr}=1\times10^{-4}$, 2~epochs, and a global batch size of $256$ ($1\times32\times8$). For math, SSB's rollout and teacher-generation phases are substantially more expensive than the other baselines, so we subsample to 20K examples while keeping the 2-epoch budget; this subsampling applies only to the SSB math setting.

\paragraph{SSB on Gemma-3-4B-PT.} Because the Gemma pretrained model alone cannot produce coherent rollouts for the self-play stage, we additionally initialise the SSB student from a 1-epoch SFT checkpoint and then run one further SSB epoch on top (total budget still $=$ 2 epochs). The Qwen-series bases (Qwen2.5-3B, Qwen3-4B-Base, Qwen3-8B-Base) do not require this warm-up and run SSB directly from the base model.

%% file: table/main_results_more_models.tex
\begin{table}[H]
    \centering
    \small
    \setlength{\tabcolsep}{4pt}
    \caption{\textbf{Results on Gemma-3-4B-PT.} We extend the main-paper benchmark (Table~\ref{tab:main_results}) to a different model family. All numbers are pass@1 accuracy (\%) under greedy decoding.}
    \label{tab:main_results_more_models}
    \resizebox{\textwidth}{!}{%
    \begin{tabular}{@{}l|c c c c c c c|c c c c c@{}}
        \toprule
        \multirow{2}{*}{\textbf{Methods}} & \multicolumn{7}{c|}{\textbf{Math}} &  \multicolumn{5}{c}{\textbf{Code}} \\
        \cmidrule(l){2-8}\cmidrule(l){9-13}
         & \textbf{MATH500} & \textbf{AMC23} & \textbf{AQUA} & \textbf{GSM8K} & \textbf{MAWPS} & \textbf{SVAMP}& Avg. & \textbf{HE+} & \textbf{MBPP+} & \textbf{BCB} & \textbf{LCB} & Avg. \\
        \midrule
         SFT & 28.0 & 17.5 & 46.5 & 54.0 & 83.6 & 68.3 & 49.7 & 36.6 & 43.1 & 18.5 & 7.5 & 26.4\\
         UNDIAL & 31.8 & \underline{20.0} & 43.7 & 58.8 & \underline{86.6} & \textbf{70.9} & 52.0 & 36.6 & \underline{47.1} & 19.2 & 7.8 & 27.7\\
         NEFTune & \underline{32.6} & 17.5 & 46.1 & \underline{59.2} & \underline{86.6} & \underline{70.8} & \underline{52.1} & 37.8 & 46.3 & \underline{20.1} & \underline{8.0} & \underline{28.1}\\
         SPIN & 30.6 & 7.5 & \underline{48.0} & 58.6 & 84.5 & 66.1 & 49.2 & 37.2 & 34.7 & 19.6 & 7.5 & 24.8\\
         SSB & 31.2 & 5.0 & 43.7 & 51.1 & 69.3 & 58.7 & 43.2 & \underline{39.0} & 39.7 & 19.0 & 7.3 & 26.3\\
         \textbf{\modelname} & \textbf{33.2} & \textbf{22.5} & \textbf{51.2} & \textbf{60.5} & \textbf{87.4} & 69.5 & \textbf{54.1} & \textbf{41.5} & \textbf{48.9} & \textbf{21.8} & \textbf{9.0} & \textbf{30.3}\\
        \bottomrule
    \end{tabular}%
    }
\end{table}

%% file: table/logic_results.tex
\begin{table*}[!htbp]
    \centering
    \small
    \setlength{\tabcolsep}{4pt}
    \caption{\textbf{Logic-reasoning results across base models.} We evaluate \modelname against SFT, UNDIAL, NEFTune, SPIN, and SSB on LogiQA~2.0 and ReClor. All numbers are pass@1 accuracy (\%) under greedy decoding. \textbf{Bold} marks the best value per column; \underline{underline} marks the second best.}
    \label{tab:logic_results}
    \resizebox{\textwidth}{!}{%
    \begin{tabular}{@{}l|c c c|c c c|c c c|c c c@{}}
        \toprule
        \multirow{2}{*}{\textbf{Methods}} & \multicolumn{3}{c|}{\textbf{Qwen3-4B-Base}} & \multicolumn{3}{c|}{\textbf{Qwen3-8B-Base}} & \multicolumn{3}{c|}{\textbf{Qwen2.5-3B}} & \multicolumn{3}{c}{\textbf{Gemma-3-4B-PT}} \\
        \cmidrule(lr){2-4} \cmidrule(lr){5-7} \cmidrule(lr){8-10} \cmidrule(lr){11-13}
         & \textbf{LogiQA2} & \textbf{ReClor} & Avg. & \textbf{LogiQA2} & \textbf{ReClor} & Avg. & \textbf{LogiQA2} & \textbf{ReClor} & Avg. & \textbf{LogiQA2} & \textbf{ReClor} & Avg. \\
        \midrule
        SFT      & 66.1 & 74.6 & 70.4 & 69.2 & 79.8 & 74.5 & 48.5 & 52.0 & 50.3 & 55.4 & 63.3 & 59.4 \\
        UNDIAL   & \underline{69.1} & 75.4 & 72.3 & 71.3 & 81.8 & 76.6 & \underline{54.8} & \underline{60.0} & \underline{57.4} & 55.7 & 67.4 & 61.6 \\
        NEFTune  & 68.2 & 73.0 & 70.6 & \underline{72.3} & \underline{83.8} & \underline{78.1} & 54.1 & 59.0 & 56.6 & 56.1 & 68.2 & 62.2 \\
        SPIN     & 68.3 & \underline{76.8} & \underline{72.6} & 71.3 & 80.6 & 76.0 & 51.5 & 58.8 & 55.2 & \underline{58.1} & \underline{71.0} & \underline{64.6} \\
        SSB      & 67.5 & 74.6 & 71.1 & 69.0 & 81.0 & 75.0 & 50.8 & 53.8 & 52.3 & 57.1 & 70.8 & 64.0 \\
        \textbf{\modelname} & \textbf{69.5} & \textbf{79.2} & \textbf{74.4} & \textbf{74.6} & \textbf{89.2} & \textbf{81.9} & \textbf{58.0} & \textbf{67.6} & \textbf{62.8} & \textbf{60.5} & \textbf{75.6} & \textbf{68.1} \\
        \bottomrule
    \end{tabular}%
    }
\end{table*}

%% file: table/logit_analysis.tex
\begin{table}[H]
    \centering
    \small 
    \setlength{\tabcolsep}{4pt} 
    \caption{\textbf{Logit Dynamics Analysis (Qwen3-4B-Base).} Comparison of logit statistics at Epoch 3. Values in parentheses indicate the relative change of \modelname vs. SFT.}
    \label{tab:logit_analysis}
    
    \renewcommand{\arraystretch}{1.25}
    
    \begin{tabular}{l c c}
        \toprule
        \textbf{Metric} & \textbf{SFT} & \textbf{\modelname} \\
        \midrule
        
        Target Strength ($z_{\text{target}}$) & $35.88$ & $36.10$ {\scriptsize(\textcolor{blue}{+0.6\%})} \\
        
        Distractors Mean ($z_{\text{bg}}$) & $2.09$ & $\mathbf{0.90}$ {\scriptsize(\textcolor{red}{\textbf{-56.9\%}})} \\
        
        \midrule
        
        Target-to-Background Gap ($\Delta_{\text{gap}}$) & $33.79$ & $\mathbf{35.20}$ {\scriptsize(\textcolor{teal}{+4.2\%})} \\
        
        Logit Std. ($\sigma$) & $2.93$ & $3.45$ {\scriptsize(\textcolor{teal}{+17.7\%})} \\
        
        \bottomrule
    \end{tabular}
\end{table}

%% file: table/logit_analysis_extended.tex
\begin{table}[t]
    \centering
    \caption{\textbf{Extended Logit Statistics Before vs.\ After WD-JT (Qwen3-4B-Base).} The weak column reports the weak-side initialization used to form mixed logits; pre/post statistics are reported for the retained strong branch. $\Delta$ denotes strong Post$-$Pre (percent change in parentheses). Centered norm values are taken directly from the logit analysis report.}
    \label{tab:logit_analysis_extended}
    \scriptsize
    \setlength{\tabcolsep}{4pt}
    \renewcommand{\arraystretch}{1.15}
    \begin{tabular}{l c c c c}
        \toprule
        Metric & Weak init & Strong Pre & Strong Post & $\Delta$ \\
        \midrule
        Mean logit $z_{\text{mean}}$ & 3.65 & 2.87 & 0.97 & -1.90 (-66.2\%) \\
        Std $\sigma$ & 3.06 & 3.18 & 3.16 & -0.02 (-0.6\%) \\
        Centered norm $\|\tilde{z}\|_2$ & 1191.33 & 1240.10 & 1229.79 & -10.31 (-0.8\%) \\
        Max logit $z_{\max}$ & 43.25 & 48.75 & 60.50 & 11.75 (+24.1\%) \\
        Min logit $z_{\min}$ & -24.00 & -26.63 & -35.75 & -9.12 (+34.2\%) \\
        L2 norm $\|z\|_2$ & 1978.92 & 1865.96 & 1688.59 & -177.37 (-9.5\%) \\
        Entropy $H$ & 0.52 & 0.44 & 0.48 & 0.04 (+9.1\%) \\
        Max prob $p_{\max}$ & 0.83 & 0.85 & 0.83 & -0.02 (-2.4\%) \\
        \bottomrule
    \end{tabular}
\end{table}

%% file: table/hyperparam_sensitivity.tex
\begin{table}[t]
    \centering
    \small
    \setlength{\tabcolsep}{4pt}
    \caption{\textbf{Post-hoc hyperparameter sensitivity analysis.} We evaluate the impact of the mixing coefficients $\alpha$ (consolidation), $\beta$ (base difficulty), and $\gamma$ (regression repair) on mathematical reasoning across two base models. These sweeps are diagnostic and are not used to select the headline models in Table~\ref{tab:main_results}. All numbers are pass@1 accuracy (\%) under greedy decoding. \textbf{Bold} marks the best value per metric within each base-model column group. ``-'' indicates that the base model produces no correct answer under any configuration.}
    \label{tab:hyperparam_sensitivity}
    \begin{tabular}{l c c c c c c c}
        \toprule
        \multirow{2}{*}{\textbf{Setup}} & \multicolumn{3}{c}{\textbf{Coefficients}} & \multicolumn{2}{c}{\textbf{Qwen3-4B-Base}} & \multicolumn{2}{c}{\textbf{Qwen2.5-3B}} \\
        \cmidrule(lr){2-4} \cmidrule(lr){5-6} \cmidrule(lr){7-8}
         & $\alpha$ & $\beta$ & $\gamma$ & \textbf{AIME 2025} & \textbf{MATH 500} & \textbf{AIME 2025} & \textbf{MATH 500} \\
        \midrule
        Config A & 0.2 & 0.7 & 0.1 & 10.0 & 68.2 & - & 44.6 \\
        \rowcolor{gray!15} \textbf{Default} & \textbf{0.1} & \textbf{0.8} & \textbf{0.1} & \textbf{16.7} & 68.2 & - & \textbf{48.2} \\
        Config B & 0.2 & 0.6 & 0.2 & 10.7 & 67.8 & - & 42.0 \\
        Config C & 0.1 & 0.9 & 0.0 & 10.3 & \textbf{70.2} & - & 47.6 \\
        Config D & 0.0 & 1.0 & 0.0 & 9.7 & 68.3 & - & 47.0 \\
        Config E & 0.05 & 0.85 & 0.1 & 14.0 & 69.5 & - & 47.2 \\
        \bottomrule
    \end{tabular}
\end{table}

%% file: table/all_lambda.tex
\begin{table*}[t]
    \centering
    \caption{\textbf{Post-hoc sensitivity analysis of mixing coefficient $\lambda$.} We report average greedy pass@1 accuracy (\%) over three independent training runs per $\lambda$ on Qwen3-4B-Base. These sweeps are diagnostic and are not used to select the headline models in Table~\ref{tab:main_results}, which use the fixed default $\lambda=0.5$. The curve exhibits a broad inverted U-shape, with the strongest performance typically in the $\lambda \in [0.42, 0.50]$ range (peak at $\lambda=0.42$ in Avg.), reflecting a balanced trade-off between learning new features and preserving historical constraints.}
    \label{tab:sensitive_of_lambda}
    
    \setlength{\tabcolsep}{4pt}
    
    \begin{small}
    \resizebox{\textwidth}{!}{
        \begin{tabular}{l ccccccccccccccccc}
            \toprule
            \multirow{2}{*}{\textbf{Dataset}} & \multicolumn{17}{c}{\textbf{Mixing Coefficient} $\lambda$} \\
            \cmidrule(l){2-18}
            & 0.1 & 0.2 & 0.3 & 0.4 &\textbf{0.42} & 0.44 & 0.46 & 0.48 & 0.5 & 0.52 & 0.54 & 0.56 & 0.58 & 0.6 & 0.7 & 0.8 & 0.9 \\
            \midrule
            AIME25   &  7.8 & 10.0 & 12.2 & 16.7 & \textbf{20.0} & 20.0 & 16.7 & 20.0 & 20.0 & 21.1 &  16.7 & 16.7 & 16.7 & 10.0 & 20.0 & 20.0 & 12.2\\
            MATH500 & 71.4 & 67.1 & 70.8 & 73.1 & \textbf{74.9} & 74.5 & 73.2 & 73.3 & 73.3 & 70.7 &  74.0 & 73.0 & 73.3 & 68.9 & 71.7 & 70.9 & 66.1\\
            AQUA  &  77.2 & 72.8 & 74.7 & 75.2 & \textbf{73.9} & 73.1 & 72.4 & 71.8 & 73.9 & 71.0 &  71.3 & 71.9 & 70.2 & 64.8 & 66.9 & 61.4 & 60.8 \\
            GSM8K  & 88.2 & 89.1 & 89.4 & 91.9 & \textbf{91.8} & 91.7 & 91.0 & 91.4 & 91.0 & 91.2 &  91.4 & 90.6 & 90.9 & 89.2 & 89.3 & 88.1 & 85.7 \\
            MAWPS  & 94.9 & 96.6 & 95.5 & 97.8 & \textbf{98.2} & 97.8 & 97.8 & 98.0 & 95.7 & 96.5 &  97.2 & 96.8 & 97.5 & 94.7 & 96.2 & 95.5 & 93.6\\
            SVAMP  & 90.4 & 91.8 & 93.3 & 94.9 & \textbf{94.1} & 93.6 & 93.5 & 92.9 & 92.6 & 92.4 & 93.5 & 92.8 & 92.9 & 90.8 & 91.9 & 89.6 & 87.4\\
            \midrule
            \textit{Avg.} & 71.7 & 71.2 & 72.7 & 74.9 & \textbf{75.5} & 75.1 & 74.1 & 74.6 & 74.4 & 73.8 & 74.0 & 73.6 & 73.6 & 69.7 & 72.7 & 70.9 & 67.6\\
            \bottomrule
        \end{tabular}
    }
    \end{small}
\end{table*}